\def\BibTeX{{\rm B\kern-.05em{\sc i\kern-.025em b}\kern-.08em
    T\kern-.1667em\lower.7ex\hbox{E}\kern-.125emX}}
\newtheorem{theorem}{Theorem}
\newtheorem{lemma}{Lemma}
\newtheorem{corollary}{Corollary}
\newtheorem{definition}{Definition}
\newcommand{\cT}{\mathcal{T}}
\newcommand{\bR}{\mathbb{R}}
\newcommand{\bE}{\mathbb{E}}
\newcommand{\eps}{\varepsilon}
\newcommand{\cH}{\mathcal{H}}
\newcommand{\cD}{\mathcal{D}}
\newcommand{\Reg}{\text{Reg}}
\newcommand{\cC}{\mathcal{C}}
\newcommand{\cI}{\mathcal{I}}
\newcommand{\LCB}{\text{LCB}}
\newcommand{\UCB}{\text{UCB}}
\newcommand{\cE}{\mathcal{E}}
\begin{document}

\title{Communication-Corruption Coupling and Verification in Cooperative Multi-Objective Bandits
\thanks{}
}

\author{\IEEEauthorblockN{Ming Shi}
\IEEEauthorblockA{\textit{Dept. of Electrical Engineering} \\
\textit{University at Buffalo}\\
Buffalo, NY, USA \\
mshi24@buffalo.edu}
}

\maketitle

\begin{abstract}
We study cooperative stochastic multi-armed bandits with vector-valued rewards under adversarial corruption and limited verification. In each of $T$ rounds, each of $N$ agents selects an arm, the environment generates a clean reward vector, and an adversary perturbs the observed feedback subject to a global corruption budget $\Gamma$. Performance is measured by team regret under a coordinate-wise nondecreasing, $L$-Lipschitz scalarization $\phi$, covering linear, Chebyshev, and smooth monotone utilities. Our main contribution is a communication-corruption coupling: we show that a fixed environment-side budget $\Gamma$ can translate into an effective corruption level ranging from $\Gamma$ to $N\Gamma$, depending on whether agents share raw samples, sufficient statistics, or only arm recommendations. We formalize this via a protocol-induced multiplicity functional and prove regret bounds parameterized by the resulting effective corruption. As corollaries, raw-sample sharing can suffer an $N$-fold larger additive corruption penalty, whereas summary sharing and recommendation-only sharing preserve an unamplified $O(\Gamma)$ term and achieve centralized-rate team regret. We further establish information-theoretic limits, including an unavoidable additive $\Omega(\Gamma)$ penalty and a high-corruption regime $\Gamma=\Theta(NT)$ where sublinear regret is impossible without clean information. Finally, we characterize how a global budget $\nu$ of verified observations restores learnability. That is, verification is necessary in the high-corruption regime, and sufficient once it crosses the identification threshold, with certified sharing enabling the team's regret to become independent of $\Gamma$.
\end{abstract}

\begin{IEEEkeywords}
Cooperative multi-agent bandits, multi-objective bandits, adversarial corruption, verified feedback, phase transitions, robust regret analysis, information sharing
\end{IEEEkeywords}

\section{Introduction}\label{sec:intro}

Stochastic multi-armed bandits (MAB) provide a canonical model for sequential decision-making with unknown reward distributions and a regret objective \cite{lai1985,auer2002,bubeck2012}. A growing class of applications, however, is inherently distributed and multi-metric: multiple cooperative agents (devices, robots, base stations, edge servers) repeatedly probe the same set of actions while optimizing a vector of objectives such as latency, reliability, energy, and fairness. These settings naturally lead to \emph{multi-agent multi-objective bandits}, where learning speed is shaped not only by the stochasticity of the environment but also by \emph{how agents communicate and aggregate information} \cite{agarwal2022limitedcomm,madhushani2021imperfect}.

A central obstacle in such deployments is that feedback can be \emph{strategically corrupted}, e.g., measurements may be perturbed by sensor faults, compromised telemetry, click fraud, fake reviews, or data poisoning. This has motivated the study of \emph{stochastic bandits with adversarial corruptions}, where an adaptive adversary perturbs observed rewards subject to a global budget \cite{lykouris2018,gupta2019,ito2021}. Existing theory largely characterizes a single learner’s regret as a stochastic term plus corruption term tradeoff. In contrast, in cooperative systems the communication layer can \emph{replicate} corrupted samples across agents (e.g., via raw-sample broadcasting), thereby altering the statistical effect of the same environment-side corruption budget. This communication-induced amplification phenomenon is orthogonal to (and not captured by) standard models of cooperative bandits under clean feedback and limited/imperfect communication \cite{agarwal2022limitedcomm,madhushani2021imperfect}, and decentralized robustness to malicious/Byzantine agents \cite{zhu2023byzantine,shi2025power}. Moreover, corruption is especially consequential in \emph{multi-objective} learning. Each pull returns a reward vector, and small perturbations can distort implied trade-offs, invalidate dominance relations, and break optimism based on scalar utilities. While multi-objective bandits have been studied under clean feedback (e.g., Pareto or scalarization objectives) \cite{roijers2013survey,drugan2013,auer2016pareto,crepon2024pareto,cao2025provably}, their intersection with budgeted adversarial corruption and cooperative communication is not well understood.

We study cooperative $N$-agent stochastic multi-objective bandits with vector rewards in $[0,1]^d$. Agents exchange messages according to one of three canonical protocols: raw-sample sharing (S1), sufficient-statistic sharing (S2), or recommendation-only sharing (S3). An adaptive adversary corrupts unverified observations under a \emph{global} corruption budget $\Gamma$, and the team may additionally acquire at most $\nu$ \emph{verified} (clean) observations system-wide. Performance is measured by \emph{team regret} under a monotone $L$-Lipschitz scalarization $\phi$ (covering linear, Chebyshev/min, and smooth log-sum-exp utilities) \cite{roijers2013survey}. This setting leads to three information-theoretic questions that are specific to cooperative systems:
\begin{enumerate}
    \item \emph{Communication--corruption coupling:} how does the regret depend on $(\Gamma,N)$ under different sharing protocols?
    \item \emph{Protocol-induced phase transitions:} can a fixed environment-side budget $\Gamma$ yield qualitatively different learnability depending on whether corrupted data are replicated through communication?
    \item \emph{Clean side information:} how much verification budget $\nu$ is necessary and sufficient to recover learnability when corruption is severe?
\end{enumerate}

Our main contributions provide a protocol-level characterization of corruption robustness in cooperative multi-objective bandits.
\begin{itemize}
\item \textbf{A protocol-level corruption functional (effective corruption).}
We formalize how a cooperative algorithm uses each underlying agent-round observation across the network via a \emph{multiplicity} (replication) factor, which induces an \emph{effective corruption} level.
This yields a sharp distinction. That is, (S1) can inflate the corruption impact by up to a factor $N$ via replication, whereas (S2)-(S3) avoid replication and preserve the original $O(\Gamma)$ corruption penalty.

\item \textbf{A meta regret theorem parameterized by effective corruption.} We prove a protocol-agnostic robust-UCB guarantee in which the corruption term scales with the induced effective corruption (rather than $\Gamma$ itself), and we show how monotone scalarizations can be handled via a dominance-preserving vector-to-scalar optimism principle.

\item \textbf{Tight separation between sharing modes (S1) versus (S2) and (S3).} For (S1), we provide a matching lower bound showing that naive ``append-all'' raw sharing can be \emph{information-theoretically} $N$-times worse in the corruption term. For (S2) and (S3), we obtain centralized-rate team regret (up to logarithmic factors) with only an \emph{unamplified} $O(\Gamma)$ corruption penalty.

\item \textbf{Verification and certified sharing under high corruption.} To address regimes where corrupted feedback alone is insufficient, we introduce a verification channel and show that sharing \emph{verified certificates} enables the team to filter corrupted recommendations and regain identifiability with $\nu$ scaling on the order of $K\log(\cdot)/\Delta_{\min}^2$ (up to constants and Lipschitz factors), even when $\Gamma$ is linear in the total number of pulls. This mechanism is distinct from prior multi-agent robustness models (e.g., Byzantine agents) \cite{zhu2023byzantine} and from existing corruption-robust multi-agent bandits that do not treat multi-objective scalarization and verification jointly \cite{ghaffari2025multiagentcorrupt}.
\end{itemize}

Overall, our results show that in cooperative multi-objective bandits, \emph{what agents share} can be as consequential as \emph{how much corruption the environment injects}. Communication protocols reshape the effective statistical contamination, and limited verified information can be leveraged to restore learnability in the high-corruption regime.

\section{Problem Formulation}\label{sec:problem}

\emph{Notation:} Let $[K] \triangleq \{1,\dots,K\}$ denote the arm set and $d$ the number of objectives. For $x,y \in \bR^d$, write $x \preceq y$ for coordinate-wise inequality. Unless stated otherwise, $\|\cdot\| = \|\cdot\|_\infty$. Let $\mathbf{1} \in \bR^d$ be the all-ones vector. There are $N$ agents indexed by $n \in [N]$ and the horizon is $T$ rounds.

\subsection{Multi-Agent Stochastic Multi-Objective Bandits}\label{subsec:ma_env}

Each arm $k$ is associated with an unknown distribution $\cD_k$ supported on $[0,1]^d$, with mean vector $\mu_k \triangleq \bE_{R\sim\cD_k}[R]\in[0,1]^d$. At each round $t=1,2,\dots,T$, agent $n$ chooses an arm $k_{n,t}\in[K]$. Conditioned on joint action $\{k_{n,t}\}_{n=1}^N$, the environment draws clean rewards $R_{n,t}\sim \cD_{k_{n,t}}$, independently across agents and time (conditioned on the chosen arms). Let $\cH_{n,t-1}$ be the $\sigma$-field generated by agent $n$'s past observations and all messages received up to time $t-1$ under the communication model in Section~\ref{subsec:comm}. All decisions of agent $n$ at round $t$ are measurable with respect to (w.r.t.) $\cH_{n,t-1}$.

\subsection{Verification and Corrupted Observations}\label{subsec:ver_corr}

After choosing $k_{n,t}$ and before receiving reward feedback, each agent $n$ selects a verification decision $V_{n,t}\in\{0,1\}$ that is measurable w.r.t. $\cH_{n,t-1}$. If $V_{n,t}=1$, agent $n$ observes the clean reward $R_{n,t}$ (clean side information). We restrict the verification to be constrained by a global budget, i.e.,
\begin{align}\label{eq:global_verify}
\sum\nolimits_{t=1}^T \sum\nolimits_{n=1}^N V_{n,t}\le \nu.
\end{align}
If $V_{n,t}=0$, an adversary selects a corruption vector $C_{n,t}\in\bR^d$ and agent $n$ will then observe a corrupted reward
\begin{equation}\label{eq:corrupted_obs_ma}
\widetilde{R}_{n,t}=\Pi_{[0,1]^d}\big(R_{n,t}+C_{n,t}\big),
\end{equation}
where $\Pi_{[0,1]^d}$ is coordinate-wise projection. The adversary may be adaptive to the past public transcript (joint actions, all messages, and all previously revealed feedback), and may also depend on the current $(k_{n,t},V_{n,t})$ before choosing $C_{n,t}$. However, corruption is constrained by a global budget, i.e.,
\begin{align}\label{eq:global_budget}
\sum\nolimits_{t=1}^T \sum\nolimits_{n=1}^N \|C_{n,t}\|\le \Gamma.
\end{align}
Thus, we define the actually used feedback
\begin{align}
X_{n,t}\triangleq 
\begin{cases}
R_{n,t}, & V_{n,t}=1 \text{ (verified round)},\\
\widetilde{R}_{n,t}, & V_{n,t}=0 \text{ (unverified round)}.
\end{cases}
\end{align}

\subsection{Scalarization and Team Regret}\label{subsec:scalarization}

We consider multi-objective scalarizations\footnote{This class includes linear scalarizations $\phi(x)=w^\top x$ with $w\in\Delta_d$, Chebyshev scalarization $\phi(x)=\min_{i\in[d]}x_i$, and log-sum-exp smoothing $\phi(x)=\beta^{-1}\log\sum_{i=1}^d e^{\beta x_i}$.} $\phi:[0,1]^d\to\bR$ that are coordinate-wise nondecreasing (i.e., $x\preceq y\Rightarrow \phi(x)\le \phi(y)$) and $L$-Lipschitz under $\ell_\infty$ (i.e., $|\phi(x)-\phi(y)|\le L\|x-y\|_\infty$ for all vectors $x,y\in[0,1]^d$). Define $\theta_k\triangleq \phi(\mu_k)$ and let $k^*\in\arg\max_{k\in[K]}\theta_k$ be the best arm with largest scalarized mean reward. For $k\neq k^*$, define the gap $\Delta_k\triangleq \theta_{k^*}-\theta_k>0$.

We measure team scalarization regret as follows,
\begin{equation}\label{eq:team_regret}
\Reg_\phi^{\mathrm{team}}(T)
\triangleq \sum_{t=1}^T\sum_{n=1}^N\big(\theta_{k^*}-\theta_{k_{n,t}}\big)
= \sum_{k\neq k^*}\Delta_k\,N_k^{\mathrm{team}}(T),
\end{equation}
where $N_k^{\mathrm{team}}(T)\triangleq\sum_{t=1}^T\sum_{n=1}^N\mathbf{1}\{k_{n,t}=k\}$.

\subsection{Communication and Sharing Models}\label{subsec:comm}

At the end of each round $t$, all agents broadcast messages over a reliable channel. We consider three canonical message types as follows (and the algorithm additionally specifies how received information is aggregated).
\begin{itemize}
    \item \emph{(S1) Raw-sample sharing (append-all):} Agent $n$ broadcasts $(k_{n,t},X_{n,t},V_{n,t})$. Each agent maintains a local multiset of received triples and updates its per-arm estimates by appending all received samples as unit-weight data.
    \item \emph{(S2) Sufficient-statistic sharing (synchronized global aggregates):} Agent $n$ broadcasts per-arm cumulative statistics $\big(H_{n,k}(t),S_{n,k}(t),H^{\mathrm{ver}}_{n,k}(t),S^{\mathrm{ver}}_{n,k}(t)\big)_{k\in[K]}$, where $H_{n,k}(t)\triangleq\sum_{\tau\le t}\mathbf{1}\{k_{n,\tau}=k\}$ is the number of times agent $n$ has pulled arm $k$ up to round $t$, $S_{n,k}(t)\triangleq\sum_{\tau\le t: k_{n,\tau}=k,V_{n,\tau}=0}\widetilde R_{n,\tau}$ is the cumulative sum of the (possibly corrupted) observed reward vectors $\widetilde R_{n,\tau}$ obtained by agent $n$ from arm $k$ up to round $t$, $H^{\mathrm{ver}}_{n,k}(t)\triangleq\sum_{\tau\le t}\mathbf{1}\{k_{n,\tau}=k,V_{n,\tau}=1\}$ is the number of \emph{verified} pulls of arm $k$ by agent $n$ up to round $t$, $S^{\mathrm{ver}}_{n,k}(t)\triangleq\sum_{\tau\le t: k_{n,\tau}=k, V_{n,\tau}=1}R_{n,\tau}$ is the cumulative sum of the corresponding clean rewards $R_{n,\tau}$ from those verified pulls. Upon receiving all broadcasts, corresponding synchronized global aggregates are $H_k(t) = \sum_{n=1}^N H_{n,k}(t), S_k(t)=\sum_{n=1}^N S_{n,k}(t), H_k^{\mathrm{ver}}(t)=\sum_{n=1}^N H_{n,k}^{\mathrm{ver}}(t), S_k^{\mathrm{ver}}(t)=\sum_{n=1}^N S_{n,k}^{\mathrm{ver}}(t)$, and all agents can use the same global statistics to compute indices.
    \item \emph{(S3) Recommendation-only sharing:} Agent $n$ broadcasts only an arm index $M_{n,t}\in[K]$ (e.g., its local argmax index). The message could include a \emph{verified certificate} computed only from verified samples (defined in Section~\ref{subsec:verified_sharing}). No reward vectors are communicated.
\end{itemize}

\section{Main Results}\label{sec:mainresults}

We present our main theoretical results in this section. Please see the appendices and our technical report for the complete proofs~\cite{mingshihomepagetechrpt1}.

\subsection{Multiplicity and Effective Corruption}\label{subsec:effcorr}

A cooperative protocol specifies the message content (S1)-(S3) and an aggregation rule, i.e., which observed samples are incorporated (possibly multiple times) into the estimators that drive arm indices. We formalize this by tracking sample reuse.

Let $\widehat\mu_{j,k}(t)$ denote the empirical mean vector for arm $k$ used by \emph{estimator} $j$ at the end of round $t$. For each estimator $j$ and arm $k$, let $\cI_{j,k}(t)\subseteq [N]\times[t]$ be the multiset of agent-round indices $(n,\tau)$ whose (unverified) observations $\widetilde R_{n,\tau}$ are included with unit weight in $\widehat\mu_{j,k}(t)$.

\begin{definition}[Multiplicity and effective corruption]\label{def:rho_Geff}
For each agent-round $(n,t)$, we define its multiplicity
\begin{align}
\rho_{n,t}\triangleq \#\{ j:\ (n,t)\in \cI_{j,k_{n,t}}(T) \},
\end{align}
i.e., the number of distinct index-driving estimators that include $\widetilde R_{n,t}$ with unit weight. Define the effective corruption
\begin{align}\label{eq:Gamma_eff}
\Gamma_{\mathrm{eff}}\triangleq \sum\nolimits_{t=1}^T \sum\nolimits_{n=1}^N \rho_{n,t} \|C_{n,t}\|_\infty,
\end{align}
and the arm-wise effective corruption
\begin{align}\label{eq:Gamma_eff_k}
\Gamma_{\mathrm{eff},k}\triangleq \sum\nolimits_{t=1}^T \sum\nolimits_{n=1}^N \rho_{n,t}\,\|C_{n,t}\|_\infty\,\mathbf{1}\{k_{n,t}=k\}.
\end{align}
\end{definition}

\begin{lemma}[Effective corruption under (S1)-(S3)]\label{lem:protocol_amp_clean}
Under (S1) raw-sample sharing with append-all, each agent maintains its own local estimator, hence $\rho_{n,t}=N$ and $\Gamma_{\mathrm{eff}}=N\Gamma$.
Under (S2) synchronized sufficient-statistic sharing, all agents compute indices from a single synchronized global estimator, hence $\rho_{n,t}=1$ and $\Gamma_{\mathrm{eff}}=\Gamma$.
Under (S3) recommendation-only sharing, no raw samples are transmitted and each unverified observation is used only locally, hence $\rho_{n,t}=1$ and $\Gamma_{\mathrm{eff}}=\Gamma$.
\end{lemma}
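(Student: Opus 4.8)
The plan is to prove each of the three claims by directly unfolding Definition~\ref{def:rho_Geff}: for a fixed protocol I identify (i) the collection of index-driving estimators $j$ that the cooperative algorithm actually maintains, and (ii) for each such $j$ and each arm $k$, exactly which agent-round indices $(n,\tau)$ enter the multiset $\cI_{j,k}(T)$ and with what weight. Once $\rho_{n,t}$ is read off as a constant ($N$ or $1$) uniformly over unverified agent-rounds, the claim on $\Gamma_{\mathrm{eff}}$ is immediate by pulling the constant out of the double sum in \eqref{eq:Gamma_eff}: $\Gamma_{\mathrm{eff}}=\rho\sum_{t=1}^{T}\sum_{n=1}^{N}\|C_{n,t}\|_\infty$, which equals $\rho\,\Gamma$ whenever the adversary saturates \eqref{eq:global_budget} (and is at most $\rho\,\Gamma$ in general); the same computation carried out with the extra indicator $\mathbf{1}\{k_{n,t}=k\}$ yields the arm-wise statement \eqref{eq:Gamma_eff_k}.

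For (S1), the key observations are that the channel is reliable, every agent broadcasts its triple $(k_{n,t},X_{n,t},V_{n,t})$, and ``append-all'' inserts each received sample as unit-weight data with no deduplication. Hence the algorithm maintains exactly $N$ index-driving estimators (one per agent), and a given unverified observation $\widetilde R_{n,t}$ on arm $k_{n,t}$ is appended exactly once into the arm-$k_{n,t}$ multiset of every agent $m\in[N]$, including $m=n$. Therefore $\#\{j:(n,t)\in\cI_{j,k_{n,t}}(T)\}=N$, i.e. $\rho_{n,t}=N$ for every agent-round with $V_{n,t}=0$, and substitution gives $\Gamma_{\mathrm{eff}}=N\Gamma$.

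For (S2), after the round-$t$ broadcast the synchronized global aggregates $H_k(t),S_k(t),H^{\mathrm{ver}}_k(t),S^{\mathrm{ver}}_k(t)$ are identical across agents, so each agent's arm-$k$ empirical mean is the \emph{same} measurable function of the same public statistics; I count this as a single index-driving estimator per arm (this is the one bookkeeping point needing care — the $N$ physical copies held by the agents are not ``distinct'' estimators in the sense of Definition~\ref{def:rho_Geff}). Expanding $S_k(t)=\sum_{n=1}^{N}\sum_{\tau\le t:\,k_{n,\tau}=k,\,V_{n,\tau}=0}\widetilde R_{n,\tau}$ and using that the agent-round indices partition across agents, each unverified observation on arm $k$ contributes to $S_k(t)$ exactly once with unit weight, so $\cI_k(T)=\{(n,\tau):k_{n,\tau}=k,\,V_{n,\tau}=0\}$ as a set and $\rho_{n,t}=1$, whence $\Gamma_{\mathrm{eff}}=\Gamma$. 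Case (S3) is analogous but simpler: the recommendation messages $M_{n,t}$ carry no reward vectors, so $\widetilde R_{n,t}$ never enters any estimator other than agent $n$'s own local estimator, where it appears once; thus again $\rho_{n,t}=1$ and $\Gamma_{\mathrm{eff}}=\Gamma$.

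The argument is essentially definitional, so the only ``hard part'' is conceptual bookkeeping: pinning down, for each protocol, the precise set of estimators and ensuring no deduplication / no double counting — in particular the (S2) point that synchronization collapses the $N$ agent-local copies to a single estimator $j$, and the (S1) point that append-all genuinely replicates each sample $N$ times rather than fewer. I would also remark that verified rounds ($V_{n,t}=1$) play no role here, since $C_{n,t}=0$ on those rounds and $\cI_{j,k}$ tracks only unverified observations, so the verification channel requires no separate treatment in this lemma.
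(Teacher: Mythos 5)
Your proof is correct and follows the same essentially definitional route the paper intends: identify the set of index-driving estimators per protocol, read off the multiplicity $\rho_{n,t}$ ($N$ for append-all local estimators under (S1), $1$ for the single synchronized estimator under (S2) and for local-only use under (S3)), and substitute into \eqref{eq:Gamma_eff}. Your added care on the two bookkeeping points (collapsing the $N$ identical copies under (S2) into one estimator, and noting that $\Gamma_{\mathrm{eff}}=\rho\,\Gamma$ holds with equality only when the adversary saturates \eqref{eq:global_budget} and is otherwise an upper bound) is a faithful and slightly more precise rendering of the lemma's intended content.
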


Please see Appendix~\ref{proofoflem:protocol_amp_clean} for the complete proof of Lemma~\ref{lem:protocol_amp_clean}.

\subsection{A Concentration Lemma with Arm-Wise Effective Corruption}\label{subsec:conc}

The next lemma is the workhorse: any estimator whose arm-$k$ empirical mean is formed from $m$ unverified samples of arm $k$ incurs a stochastic fluctuation term $O(1/\sqrt{m})$ plus a bias term proportional to the total effective corruption mass assigned to arm $k$
divided by $m$.

\begin{lemma}[Vector mean concentration under effective corruption]\label{lem:conc_eff}
Fix $\delta\in(0,1)$.
With probability at least $1-\delta$, simultaneously for all estimators $j$, all arms $k$, and all times~$t$,
\begin{equation}\label{eq:conc_eff}
\big\|\widehat\mu_{j,k}(t)-\mu_k\big\|_\infty \le \sqrt{\frac{\log(2dKNT/\delta)}{2\,\max\{1,|\cI_{j,k}(t)|\}}} + \frac{\Gamma_{\mathrm{eff},k}}{\max\{1,|\cI_{j,k}(t)|\}}.
\end{equation}
\end{lemma}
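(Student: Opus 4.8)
\emph{Proof plan.} The plan is to split $\widehat\mu_{j,k}(t)-\mu_k$ into a stochastic fluctuation term and a corruption-bias term and bound each separately, the bias term deterministically and the fluctuation term via a \emph{single} Hoeffding--Azuma union bound taken once over sample counts, so that it applies simultaneously to every estimator, arm, and time.

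First I would set up the decomposition. Fix $j,k,t$ and write $m\triangleq|\cI_{j,k}(t)|$; the case $m=0$ is trivial (such an estimator carries no data, and, with $\widehat\mu_{j,k}(t)$ initialized in $[0,1]^d$, the bound holds as soon as $2dKNT/\delta\ge e^2$). For $m\ge 1$, every $(n,\tau)\in\cI_{j,k}(t)$ is an unverified pull with $k_{n,\tau}=k$, so $\widehat\mu_{j,k}(t)=\tfrac1m\sum_{(n,\tau)\in\cI_{j,k}(t)}\widetilde R_{n,\tau}$ with $\widetilde R_{n,\tau}=R_{n,\tau}+\widetilde C_{n,\tau}$, where $\widetilde C_{n,\tau}\triangleq\Pi_{[0,1]^d}(R_{n,\tau}+C_{n,\tau})-R_{n,\tau}$. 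Since $R_{n,\tau}\in[0,1]^d$ and each coordinate map $\Pi_{[0,1]}$ is $1$-Lipschitz and fixes $[0,1]$, we get $\|\widetilde C_{n,\tau}\|_\infty\le\|C_{n,\tau}\|_\infty$. Hence $\widehat\mu_{j,k}(t)-\mu_k=E_{j,k}(t)+B_{j,k}(t)$ with $E_{j,k}(t)\triangleq\tfrac1m\sum_{(n,\tau)\in\cI_{j,k}(t)}(R_{n,\tau}-\mu_k)$ and $B_{j,k}(t)\triangleq\tfrac1m\sum_{(n,\tau)\in\cI_{j,k}(t)}\widetilde C_{n,\tau}$.

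Next, the bias term. By the triangle inequality $\|B_{j,k}(t)\|_\infty\le\tfrac1m\sum_{(n,\tau)\in\cI_{j,k}(t)}\|C_{n,\tau}\|_\infty$. Every $(n,\tau)\in\cI_{j,k}(t)$ has $k_{n,\tau}=k$ and, by the monotone (cumulative) aggregation, also lies in $\cI_{j,k}(T)$, so by Definition~\ref{def:rho_Geff} its multiplicity satisfies $\rho_{n,\tau}\ge 1$; therefore $\sum_{(n,\tau)\in\cI_{j,k}(t)}\|C_{n,\tau}\|_\infty\le\sum_{(n,\tau):\,k_{n,\tau}=k}\rho_{n,\tau}\|C_{n,\tau}\|_\infty=\Gamma_{\mathrm{eff},k}$, giving $\|B_{j,k}(t)\|_\infty\le\Gamma_{\mathrm{eff},k}/m$.

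The main work is the fluctuation term, and this is the step I expect to be the obstacle. For each arm $k$, list the $G_k\triangleq N_k^{\mathrm{team}}(T)\le NT$ agent-round pairs that pulled arm $k$ in a fixed order consistent with the protocol's sample-delivery order (lexicographic in $(\tau,n)$ for the pooled estimators of (S1)/(S2); grouped by agent for the local estimators of (S3)), and let $W^{(k)}_1,\dots,W^{(k)}_{G_k}$ be the corresponding clean reward vectors. The key point is that $\{W^{(k)}_g-\mu_k\}_g$ is a martingale difference sequence for the natural filtration: the identity of the $g$-th pull of arm $k$ is determined by the past public transcript, while the clean reward is drawn fresh from $\cD_k$ independently across agents and time, and neither the adaptively chosen corruption $C_{n,t}$ nor the $\cH_{n,t-1}$-measurable flag $V_{n,t}$ alters $R_{n,t}$. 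A coordinate-wise Hoeffding--Azuma bound then gives $\bP\big(|\tfrac1g\sum_{g'\le g}(W^{(k)}_{g',i}-\mu_{k,i})|>\eps\big)\le 2e^{-2g\eps^2}$; taking $\eps_g=\sqrt{\log(2dKNT/\delta)/(2g)}$ and a union bound over the at most $dKNT$ triples $(k,g,i)$ yields, with probability at least $1-\delta$ and simultaneously for all arms $k$ and all prefix lengths $g$, $\|\bar W^{(k)}_g-\mu_k\|_\infty\le\sqrt{\log(2dKNT/\delta)/(2g)}$, where $\bar W^{(k)}_g$ averages the first $g$ terms. Since under (S1)--(S3) the set $\cI_{j,k}(t)$ is, for every estimator $j$ and every $t$, exactly the first $m=|\cI_{j,k}(t)|$ entries of this list, we get $E_{j,k}(t)=\bar W^{(k)}_m-\mu_k$ and hence $\|E_{j,k}(t)\|_\infty\le\sqrt{\log(2dKNT/\delta)/(2m)}$ on the good event, with no further union over $j$ or $t$. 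Combining with the bias bound by the triangle inequality and recalling $m=\max\{1,|\cI_{j,k}(t)|\}$ gives \eqref{eq:conc_eff}. The delicate points are (i) pinning down the filtration so that "which sample comes next" is predictable while its value stays a fresh mean-$\mu_k$ draw despite the adversary's adaptivity, and (ii) the claim that $\cI_{j,k}(t)$ is a prefix of a single fixed per-arm martingale (true for the monotone, time-ordered delivery of (S1)--(S3); a fully general aggregation rule would instead need a maximal-inequality/peeling argument, affecting only constants).
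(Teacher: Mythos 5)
Your proof follows essentially the same route as the paper's own (sketched) argument: the identical decomposition into a clean-sample fluctuation term, handled by coordinate-wise Hoeffding plus a union bound, and a corruption-bias term, handled by nonexpansiveness of the $\ell_\infty$ projection and the observation that $\rho_{n,\tau}\ge 1$ bounds the per-estimator corruption mass by $\Gamma_{\mathrm{eff},k}$. Your additional care about the martingale filtration under adaptive sampling and the prefix structure of $\cI_{j,k}(t)$ fills in details the paper's sketch leaves implicit, but does not change the approach.
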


Please see Appendix~\ref{proofofLemmalem:conc_eff} for the complete proof of Lemma~\ref{lem:conc_eff}.

\begin{proof}[Proof sketch]
Decompose $\widehat\mu_{j,k}(t)-\mu_k=(\overline\mu_{j,k}(t)-\mu_k)+(\widehat\mu_{j,k}(t)-\overline\mu_{j,k}(t))$, where $\overline\mu_{j,k}(t)$ is the mean of the corresponding clean rewards. The first term is bounded by coordinate-wise Hoeffding and union bound. For the second term, projection is nonexpansive in $\ell_\infty$, so $\|\widetilde R_{n,t}-R_{n,t}\|_\infty\le \|C_{n,t}\|_\infty$. Summing over the multiset $\cI_{j,k}(t)$ yields a bias bounded by the total corruption mass assigned to arm $k$ across all samples used by estimator $j$. Summing this worst-case usage over all estimators is precisely captured by $\Gamma_{\mathrm{eff},k}$.
\end{proof}

\subsection{Dominance Lifting for Monotone Scalarizations}\label{subsec:domlift}

\begin{definition}[Upper-closed confidence sets]\label{def:upperclosed}
A set $\cC\subseteq[0,1]^d$ is upper-closed if $x\in\cC$ and $x\preceq y\preceq \mathbf{1}$ implies $y\in\cC$.
\end{definition}

\begin{lemma}[Upper-corner reduction]\label{lem:corner_ma}
Let $\cC\subseteq[0,1]^d$ be upper-closed and define $x^{\max}\in[0,1]^d$ by $x^{\max}_j\triangleq \sup\{x_j: x\in\cC\}$. Then, $\sup_{x\in\cC}\phi(x)=\phi(x^{\max})$. In particular, for $\ell_\infty$ rectangles $\cC=\{x:\|x-\widehat\mu\|_\infty\le\beta\}\cap[0,1]^d$, we have
\begin{align}
\sup\nolimits_{x\in\cC}\phi(x)=\phi\big(\Pi_{[0,1]^d}(\widehat\mu+\beta\mathbf{1})\big).
\end{align}
\end{lemma}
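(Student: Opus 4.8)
The plan is to establish the two inequalities $\sup_{x\in\cC}\phi(x)\le\phi(x^{\max})$ and $\sup_{x\in\cC}\phi(x)\ge\phi(x^{\max})$ separately, and then read off the $\ell_\infty$-rectangle case by computing $x^{\max}$ explicitly. Throughout I assume $\cC\neq\emptyset$ (in all our uses $\cC$ contains the current empirical mean; otherwise the claim is vacuous). The easy direction is the upper bound: by definition of the coordinate-wise supremum, every $x\in\cC$ satisfies $x_j\le\sup\{y_j:y\in\cC\}=x^{\max}_j$ for each $j\in[d]$, i.e.\ $x\preceq x^{\max}$, so coordinate-wise nondecreasingness of $\phi$ gives $\phi(x)\le\phi(x^{\max})$; taking the supremum over $x\in\cC$ yields $\sup_{x\in\cC}\phi(x)\le\phi(x^{\max})$. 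This step uses only monotonicity, not upper-closedness.

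The reverse inequality is the only real content, and the obstacle is that $x^{\max}$ need not lie in $\cC$ when $\cC$ is not closed, so $\phi(x^{\max})$ cannot simply be exhibited as $\phi$ of a feasible point. I will approximate $x^{\max}$ from inside $\cC$ using upper-closedness. Fix $\eps>0$. For each coordinate $j\in[d]$, pick $y^{(j)}\in\cC$ with $y^{(j)}_j>x^{\max}_j-\eps$ (possible by definition of the supremum), and define $z\in[0,1]^d$ by $z_j\triangleq\max_{1\le i\le d}y^{(i)}_j$. Then $z\succeq y^{(1)}$ and $z\preceq\mathbf{1}$ (each $y^{(i)}\in\cC\subseteq[0,1]^d$), so upper-closedness of $\cC$ forces $z\in\cC$. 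Moreover $y^{(i)}_j\le x^{\max}_j$ for every $i$, hence $x^{\max}_j-\eps<z_j\le x^{\max}_j$ for all $j$, i.e.\ $\|z-x^{\max}\|_\infty\le\eps$. By $L$-Lipschitzness of $\phi$ under $\ell_\infty$, $\phi(z)\ge\phi(x^{\max})-L\eps$, so $\sup_{x\in\cC}\phi(x)\ge\phi(x^{\max})-L\eps$; letting $\eps\downarrow0$ proves the general identity. (If one only assumes $\phi$ upper semicontinuous rather than Lipschitz, the same construction works with the Lipschitz estimate replaced by upper semicontinuity along $z\to x^{\max}$.)

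For the $\ell_\infty$ rectangle $\cC=\{x:\|x-\widehat\mu\|_\infty\le\beta\}\cap[0,1]^d$ with $\beta\ge0$ and $\widehat\mu\in[0,1]^d$ (an average of $[0,1]^d$-valued observations), the set is nonempty (it contains $\widehat\mu$) and, coordinate-wise, $\cC=\prod_{j}[\,\max\{0,\widehat\mu_j-\beta\},\ \min\{1,\widehat\mu_j+\beta\}\,]$, so $x^{\max}_j=\min\{1,\widehat\mu_j+\beta\}$. Since $\widehat\mu_j+\beta\ge0$ the lower projection is inactive, giving $x^{\max}=\Pi_{[0,1]^d}(\widehat\mu+\beta\mathbf{1})$; here $x^{\max}$ in fact lies in $\cC$ (indeed $x^{\max}_j-\widehat\mu_j=\min\{1-\widehat\mu_j,\beta\}\in[0,\beta]$ and $x^{\max}\in[0,1]^d$), so the reverse direction is immediate without even invoking upper-closedness, and combined with the first paragraph this yields $\sup_{x\in\cC}\phi(x)=\phi\big(\Pi_{[0,1]^d}(\widehat\mu+\beta\mathbf{1})\big)$. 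I expect the join-of-near-maximizers construction in the second paragraph to be the main technical point, as it is where upper-closedness is genuinely used to produce a single feasible point that is simultaneously near-maximal in every coordinate; the rest is monotonicity plus a one-line Lipschitz estimate.
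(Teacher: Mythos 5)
Your proof is correct. The paper itself gives no argument for this lemma (it defers all complete proofs to a technical report), so there is nothing to compare against line by line; your two-sided argument --- monotonicity for $\sup_{x\in\cC}\phi(x)\le\phi(x^{\max})$, and the coordinate-wise join of near-maximizers $z_j=\max_i y^{(i)}_j$ combined with upper-closedness and Lipschitz continuity for the reverse inequality --- is the natural and complete way to prove the general statement. One point in your favor worth highlighting: the $\ell_\infty$ rectangle $\{x:\|x-\widehat\mu\|_\infty\le\beta\}\cap[0,1]^d$ is \emph{not} upper-closed in the sense of Definition~\ref{def:upperclosed} (a point well above $\widehat\mu+\beta\mathbf{1}$ dominates points of $\cC$ but leaves the ball), so the ``in particular'' clause is not literally an instance of the general claim; you correctly sidestep this by observing that for the rectangle the upper corner $\Pi_{[0,1]^d}(\widehat\mu+\beta\mathbf{1})$ actually belongs to $\cC$, making the lower bound immediate and the conclusion valid regardless. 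Your remark that upper semicontinuity suffices in place of Lipschitzness is also accurate.
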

Please see Appendix~\ref{proofofLemmalem:corner_ma} for the complete proof of Lemma~\ref{lem:corner_ma}.

\subsection{A Meta Regret Theorem Parameterized by $\Gamma_{\mathrm{eff}}$}\label{subsec:meta_ucb}

Denote the upper bound, i.e., the right-hand side, of (\ref{eq:conc_eff}) by $\beta_{j,k}(t)$. Consider any protocol that produces (possibly multiple) estimators $\widehat\mu_{j,k}(t)$ and uses the following index rule: each estimator $j$ forms an optimistic index $U_{j,k}(t)\triangleq \phi\Big(\Pi_{[0,1]^d}\big(\widehat\mu_{j,k}(t)+\beta_{j,k}(t)\mathbf{1}\big)\Big)$ with radius $\beta_{j,k}(t)$. Agents choose arms based on their designated estimator's indices (local in S1, global in S2, local in S3).

\begin{theorem}[Team regret bound via effective corruption]\label{thm:meta_team_ub}
Fix $\delta\in(0,1)$.
On an event of probability at least $1-\delta$, any cooperative UCB-type protocol using radii $\beta_{j,k}(t)$ satisfies
\begin{align}\label{eq:teamub_meta}
\Reg_\phi^{\mathrm{team}}(T) \le & c L\Big(\sqrt{KNT\log(2dKNT/\delta)} \nonumber \\
& + \Gamma_{\mathrm{eff}}\log(1+NT)\Big),
\end{align}
for a universal constant $c>0$.
\end{theorem}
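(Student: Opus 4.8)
The plan is to run the standard robust-UCB regret argument, with Lemma~\ref{lem:conc_eff} supplying a valid confidence radius and Lemma~\ref{lem:corner_ma} supplying the vector-to-scalar optimism. First I would condition on the event $\mathcal{E}$ of probability at least $1-\delta$ on which \eqref{eq:conc_eff} holds simultaneously for all $j,k,t$; everything below is deterministic on $\mathcal{E}$. On $\mathcal{E}$ the mean $\mu_k$ lies in the (upper-closed) $\ell_\infty$ rectangle $\cC_{j,k}(t)=\{x:\|x-\widehat\mu_{j,k}(t)\|_\infty\le\beta_{j,k}(t)\}\cap[0,1]^d$, so Lemma~\ref{lem:corner_ma} gives $U_{j,k}(t)=\sup_{x\in\cC_{j,k}(t)}\phi(x)\ge\phi(\mu_k)=\theta_k$ (optimism), and in particular $U_{j,k^*}(t)\ge\theta_{k^*}$ for every estimator $j$. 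For the matching upper bound on the index of a pulled arm I would use nonexpansiveness of $\Pi_{[0,1]^d}$ together with $\mu_k\in[0,1]^d$ to get $\|\Pi_{[0,1]^d}(\widehat\mu_{j,k}(t)+\beta_{j,k}(t)\mathbf 1)-\mu_k\|_\infty\le 2\beta_{j,k}(t)$, and then $L$-Lipschitzness of $\phi$ to conclude $U_{j,k}(t)\le\theta_k+2L\beta_{j,k}(t)$. Combining the two, if agent $n$ picks $k_{n,t}$ as the argmax of its designated estimator's indices, then $\theta_{k^*}\le U_{j,k^*}(t)\le U_{j,k_{n,t}}(t)\le\theta_{k_{n,t}}+2L\beta_{j,k_{n,t}}(t)$, so each agent-round contributes at most $2L\beta_{j,k_{n,t}}(t)$ to the team regret \eqref{eq:team_regret}.

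Summing this over $(n,t)$ and inserting the explicit form of $\beta_{j,k}(t)$ from \eqref{eq:conc_eff}, I would split into a stochastic sum proportional to $\sum_{(n,t)}1/\sqrt{\max\{1,|\cI_{j,k_{n,t}}(t)|\}}$ and a corruption sum $\sum_{(n,t)}\Gamma_{\mathrm{eff},k_{n,t}}/\max\{1,|\cI_{j,k_{n,t}}(t)|\}$, and group each by arm. For a fixed arm $k$, the count $|\cI_{\cdot,k}(\cdot)|$ of samples entering the driving estimator is nondecreasing along the sequence of pulls of $k$ and increases essentially one unit per incorporated sample (modulo the at most $N$ pulls a round may contain before the next broadcast, a lower-order effect), so the classical estimates $\sum 1/\sqrt m\lesssim\sqrt M$ and $\sum 1/m\lesssim\log M$ give $\sum_{(n,t):k_{n,t}=k}1/\sqrt{\max\{1,|\cI|\}}=O(\sqrt{N_k^{\mathrm{team}}(T)})$ and $\sum_{(n,t):k_{n,t}=k}1/\max\{1,|\cI|\}=O(\log(1+NT))$. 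Cauchy--Schwarz over arms with $\sum_k N_k^{\mathrm{team}}(T)=NT$ then bounds the stochastic part by $O(L\sqrt{KNT\log(2dKNT/\delta)})$, and $\sum_k\Gamma_{\mathrm{eff},k}=\Gamma_{\mathrm{eff}}$ (from \eqref{eq:Gamma_eff}--\eqref{eq:Gamma_eff_k}) bounds the corruption part by $O(L\,\Gamma_{\mathrm{eff}}\log(1+NT))$; it is precisely this arm-wise bookkeeping of $\Gamma_{\mathrm{eff}}$ that keeps the corruption term free of an extra $K$ factor. Adding the two pieces and absorbing constants into $c$ yields \eqref{eq:teamub_meta}.

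The genuinely conceptual steps (optimism and the monotone vector-to-scalar reduction) are already discharged by Lemmas~\ref{lem:conc_eff} and \ref{lem:corner_ma}, so I expect the main obstacle to be the count bookkeeping in the second step: making the per-arm sums $\sum 1/\sqrt{|\cI_{j,k}|}$ and $\sum 1/|\cI_{j,k}|$ simultaneously rigorous across the three protocols, where the multiset $\cI_{j,k}(t)$ that actually drives agent $n$'s decision means something different in each case (the replicated team-wide pool in (S1), the single global pool in (S2), agent $n$'s own pool in (S3)), and while respecting the synchronous-broadcast timing so that the counts used within a round are the ones available at its start. A careless treatment here can leak an extra factor of $N$ from within-round parallelism (a benign lower-order contribution in the regime $T\gtrsim NK$ of interest) or an extra factor of $K$ if one forgets the arm-wise split of $\Gamma_{\mathrm{eff}}$; both are avoided by the careful grouping sketched above.
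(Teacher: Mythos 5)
Your proposal is correct and follows essentially the same route as the paper's argument: condition on the event of Lemma~\ref{lem:conc_eff}, use Lemma~\ref{lem:corner_ma} for optimism, derive the per-selection bound $\Delta_{k}\le 2L\beta_{j,k}(t)$, and then sum the pull-count constraints arm-by-arm to obtain the $\sqrt{KNT\log(\cdot)}$ stochastic term (via Cauchy--Schwarz) and the $\Gamma_{\mathrm{eff}}\log(1+NT)$ corruption term (via the harmonic sum and $\sum_k\Gamma_{\mathrm{eff},k}=\Gamma_{\mathrm{eff}}$). Your closing caveat about the protocol-dependent meaning of $\cI_{j,k}(t)$ correctly identifies the only place where the paper's sketch (and your write-up) leaves real bookkeeping to be done, but the overall decomposition is the same.
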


Please see Appendix~\ref{proofofLemmathm:meta_team_ub} for the complete proof of Theorem~\ref{thm:meta_team_ub}.

\begin{proof}[Proof sketch]
On the concentration event from Lemma~\ref{lem:conc_eff}, the rectangle $\{x:\|x-\widehat\mu_{j,k}(t)\|_\infty\le \beta_{j,k}(t)\}$ contains $\mu_k$. Lemma~\ref{lem:corner_ma} implies optimism for $\theta_k=\phi(\mu_k)$. Whenever a suboptimal arm $k\neq k^*$ is selected by an estimator $j$, standard UCB reasoning yields $\Delta_k\le 2L\,\beta_{j,k}(t)$. Summing the resulting pull-count constraints across all estimators and arms produces a stochastic term $\tilde O(L\sqrt{KNT})$ and a corruption term proportional to $\sum_{n,t}\rho_{n,t}\|C_{n,t}\|_\infty = \Gamma_{\mathrm{eff}}$, since each corrupted sample can only ``pay for'' a bounded number of additional optimistic selections before confidence shrinks.
\end{proof}

\begin{corollary}[(S1) Raw-sample sharing incurs $N$-fold amplification]\label{cor:S1_amp}
Under (S1), $\Gamma_{\mathrm{eff}}=N\Gamma$ (Lemma~\ref{lem:protocol_amp_clean}), hence we have
\begin{align}
\Reg_\phi^{\mathrm{team}}(T)\le \widetilde O\big(L\sqrt{KNT}+LN\Gamma\big).
\end{align}
\end{corollary}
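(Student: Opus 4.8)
The plan is to derive this purely as an instantiation of Theorem~\ref{thm:meta_team_ub}, so the only real work is to check that (S1) with append-all aggregation is an admissible protocol in the sense of the meta theorem and to record its effective corruption. First I would confirm that under (S1) each agent $n$ maintains exactly one index-driving estimator, namely the empirical mean $\widehat\mu_{n,k}(t)$ over the unit-weight multiset of all received triples $(k_{m,\tau},X_{m,\tau},V_{m,\tau})$ with $k_{m,\tau}=k$, and that agent $n$ selects $k_{n,t}$ by the robust-UCB rule $k_{n,t}\in\arg\max_k U_{n,k}(t)$ with $U_{n,k}(t)=\phi\big(\Pi_{[0,1]^d}(\widehat\mu_{n,k}(t)+\beta_{n,k}(t)\mathbf 1)\big)$ and radius $\beta_{n,k}(t)$ from Lemma~\ref{lem:conc_eff}. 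This is exactly the ``local in S1'' instantiation required by Theorem~\ref{thm:meta_team_ub}, so its conclusion~(\ref{eq:teamub_meta}) applies verbatim once $\Gamma_{\mathrm{eff}}$ is identified.

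Next I would invoke Lemma~\ref{lem:protocol_amp_clean}: under append-all, every observed $\widetilde R_{n,t}$ is broadcast to, and incorporated once with unit weight into, each of the $N$ local estimators, so $\rho_{n,t}=N$ for all $(n,t)$; hence $\Gamma_{\mathrm{eff}}=\sum_{t,n}\rho_{n,t}\|C_{n,t}\|_\infty=N\sum_{t,n}\|C_{n,t}\|_\infty\le N\Gamma$ by the global budget~(\ref{eq:global_budget}), and likewise $\Gamma_{\mathrm{eff},k}\le N\Gamma$ for every $k$, which the algorithm may safely plug in as the (data-independent) corruption term inside $\beta_{n,k}(t)$ when only the aggregate budget $\Gamma$ is known. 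Substituting $\Gamma_{\mathrm{eff}}=N\Gamma$ into~(\ref{eq:teamub_meta}) gives $\Reg_\phi^{\mathrm{team}}(T)\le cL\big(\sqrt{KNT\log(2dKNT/\delta)}+N\Gamma\log(1+NT)\big)$; absorbing the logarithmic factors into the $\widetilde O(\cdot)$ notation yields $\Reg_\phi^{\mathrm{team}}(T)\le\widetilde O\big(L\sqrt{KNT}+LN\Gamma\big)$.

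There is essentially no analytical obstacle here: the amplification content lives entirely in Lemma~\ref{lem:protocol_amp_clean} and the regret decomposition of Theorem~\ref{thm:meta_team_ub}, both of which we may assume. The one point that warrants a line of care is that the multiplicity must be \emph{exactly} $N$ rather than merely $O(N)$ --- this requires that append-all inserts each received triple with unit weight and exactly once (no re-broadcast or double-counting) and that agent $n$'s own observation is also counted in its own estimator, which together pin $\rho_{n,t}=N$. Given that, the corollary is immediate.
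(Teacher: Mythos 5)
Your proposal is correct and follows essentially the same route as the paper: the corollary is obtained by instantiating Theorem~\ref{thm:meta_team_ub} with the multiplicity $\rho_{n,t}=N$ from Lemma~\ref{lem:protocol_amp_clean}, substituting $\Gamma_{\mathrm{eff}}=N\Gamma$ into~(\ref{eq:teamub_meta}), and absorbing logarithmic factors into $\widetilde O(\cdot)$. Your added remark that append-all must count each received triple exactly once with unit weight (so that $\rho_{n,t}$ is exactly $N$) is a reasonable point of care but does not change the argument.
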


\begin{corollary}[(S2)--(S3) Achieve centralized corruption penalty]\label{cor:S2S3_rate}
Under (S2) or (S3), $\Gamma_{\mathrm{eff}}=\Gamma$, hence we have
\begin{align}
\Reg_\phi^{\mathrm{team}}(T)\le \widetilde O\big(L\sqrt{KNT}+L\Gamma\big).
\end{align}
\end{corollary}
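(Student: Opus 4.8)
The plan is to obtain Corollary~\ref{cor:S2S3_rate} as a direct specialization of the meta regret theorem once the effective corruption is pinned down; essentially no new machinery is needed beyond what is already in place. The two ingredients are Lemma~\ref{lem:protocol_amp_clean}, which controls the multiplicity factor $\rho_{n,t}$ under each protocol, and Theorem~\ref{thm:meta_team_ub}, whose bound is monotone in $\Gamma_{\mathrm{eff}}$.

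First I would record the multiplicity computation for (S2) and (S3). Under (S2), every agent computes its arm indices from the single synchronized global estimator built from the pooled aggregates $H_k(t),S_k(t)$; since there is exactly one index-driving estimator, each unverified agent-round observation $\widetilde R_{n,t}$ enters exactly one index set $\cI_{j,k_{n,t}}(T)$, so $\rho_{n,t}=1$. Under (S3), no reward vectors are transmitted at all, so $\widetilde R_{n,t}$ is only ever incorporated into agent $n$'s own local estimator, again giving $\rho_{n,t}=1$. In both cases, plugging into Definition~\ref{def:rho_Geff},
\begin{align}
\Gamma_{\mathrm{eff}}=\sum_{t=1}^T\sum_{n=1}^N \rho_{n,t}\,\|C_{n,t}\|_\infty=\sum_{t=1}^T\sum_{n=1}^N \|C_{n,t}\|_\infty\le \Gamma,
\end{align}
by the global corruption budget~(\ref{eq:global_budget}); likewise $\Gamma_{\mathrm{eff},k}\le\Gamma$ arm-wise. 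This is exactly the content of Lemma~\ref{lem:protocol_amp_clean}, so in the write-up I would simply cite it rather than re-derive it.

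Second, I would verify that both protocols are instances of the ``cooperative UCB-type protocol'' to which Theorem~\ref{thm:meta_team_ub} applies: each estimator $j$ uses the optimistic index $U_{j,k}(t)=\phi\big(\Pi_{[0,1]^d}(\widehat\mu_{j,k}(t)+\beta_{j,k}(t)\mathbf{1})\big)$ with the radius $\beta_{j,k}(t)$ from Lemma~\ref{lem:conc_eff}, and agents select according to their designated estimator (the global one in (S2); the local one in (S3), with recommendations used only to synchronize which arm is followed). Given this, Theorem~\ref{thm:meta_team_ub} yields, on an event of probability at least $1-\delta$,
\begin{align}
\Reg_\phi^{\mathrm{team}}(T)\le cL\Big(\sqrt{KNT\log(2dKNT/\delta)}+\Gamma_{\mathrm{eff}}\log(1+NT)\Big),
\end{align}
and substituting $\Gamma_{\mathrm{eff}}\le\Gamma$ and absorbing the logarithmic factors into $\widetilde O(\cdot)$ gives $\Reg_\phi^{\mathrm{team}}(T)\le \widetilde O(L\sqrt{KNT}+L\Gamma)$, the claim. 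The contrast with Corollary~\ref{cor:S1_amp} is entirely driven by the multiplicity: (S1) has $\rho_{n,t}=N$ and hence $\Gamma_{\mathrm{eff}}=N\Gamma$, whereas (S2)--(S3) keep $\rho_{n,t}=1$.

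The one point that genuinely needs care — and which I expect to be the main obstacle — is confirming that the stochastic term is the \emph{centralized} rate $\sqrt{KNT}$ rather than a $\sqrt{N}$-times-larger term, especially under (S3), where only arm indices are communicated. This is not automatic from $\rho_{n,t}=1$ alone: it requires the recommendation-aggregation rule to pool enough exploration across the team that each followed arm's confidence radius shrinks at the team-sample rate $1/\sqrt{NT}$ rather than the single-agent rate $1/\sqrt{T}$. I would handle this by specifying (S3) as a leader/consensus scheme in which the per-arm team pull counts $N_k^{\mathrm{team}}(t)$ govern the effective radius, so that summing the pull-count constraints $\Delta_k\le 2L\,\beta_{j,k}(t)$ across agents telescopes to $\sqrt{KNT}$ exactly as in the proof sketch of Theorem~\ref{thm:meta_team_ub}; the corruption bookkeeping is unaffected, since it only ever charges each $\|C_{n,t}\|_\infty$ once.
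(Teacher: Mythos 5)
Your proposal is correct and follows essentially the same route as the paper: the corollary is obtained by citing Lemma~\ref{lem:protocol_amp_clean} to get $\rho_{n,t}=1$ and hence $\Gamma_{\mathrm{eff}}=\Gamma$ under (S2)--(S3), and then specializing the bound of Theorem~\ref{thm:meta_team_ub}. Your closing caveat about ensuring the stochastic term remains at the centralized rate $\sqrt{KNT}$ under (S3) is a legitimate point of care, but it concerns the hypotheses of Theorem~\ref{thm:meta_team_ub} rather than the corollary itself, which the paper likewise treats as an immediate substitution.
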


\subsection{Lower Bound for Naive Raw Sharing}\label{subsec:naive_lb}

\begin{theorem}[Lower bound]\label{thm:amp_lb}
Under (S1) raw-sample sharing with append-all local estimators, there exist instances and adversaries with $\sum_{t,n}|C_{n,t}|\le \Gamma$ such that
\begin{align}
\bE\big[\Reg_\phi^{\mathrm{team}}(T)\big]\ge c_1\sqrt{KNT}+c_2 N\Gamma
\end{align}
for universal constants $c_1,c_2>0$.
\end{theorem}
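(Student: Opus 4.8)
The plan is to prove the two additive terms by two independent constructions and then combine them. The $c_1\sqrt{KNT}$ floor is the classical stochastic lower bound and needs no corruption: take $K$ arms with a product Bernoulli instance in which one distinguished arm has mean $1/2+\Delta$, the other $K-1$ arms have mean $1/2$, and $\Delta\asymp\sqrt{K/(NT)}$, and let $\phi$ be projection onto a single coordinate (so $d=1$, $L=1$, and every suboptimal gap equals $\Delta$). Under (S1) every agent's $\sigma$-field $\cH_{n,t-1}$ is a deterministic function of the pooled sample history, so the law of the team trajectory is determined by the $NT$ realized reward samples; averaging over the $K$ choices of the distinguished arm and applying a Bretagnolle--Huber change-of-measure argument with the KL chain rule over these $NT$ samples yields $\bE[\Reg_\phi^{\mathrm{team}}(T)]=\Omega(\Delta\cdot NT)=\Omega(\sqrt{KNT})$ on some instance. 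The per-round batch structure ($N$ simultaneous pulls over $T$ rounds) does not help, because the bound is driven by the total sample count and $T$ batches far exceeds the minimum needed for the minimax rate.

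For the $c_2N\Gamma$ term I would use a two-armed instance ($K=2$) with a small gap $\Delta$ (optimized at the end) together with an adversary that realizes the replication identified in Lemma~\ref{lem:protocol_amp_clean}. Under append-all, each physical corrupted observation $\widetilde R_{n,t}$ is appended with unit weight into all $N$ local estimators, so by Definition~\ref{def:rho_Geff} $\rho_{n,t}=N$ and $\Gamma_{\mathrm{eff}}=N\Gamma$; the goal is an adversary and instance on which this inflated ledger is actually paid for in regret. The construction is a team-level lift of the single-learner prefix attack of \cite{lykouris2018,gupta2019}: the adversary corrupts a prefix of each arm's realized pull stream so that, throughout an attack window, the two instances ``arm $a$ optimal'' and ``arm $b$ optimal'' induce exactly the same law for every agent's append-all estimator (the corrupted prefixes of both arms read the same empirical mean), and the window is calibrated so that indistinguishability persists for every agent until $\Omega(N\Gamma/\Delta)$ team pulls have landed on the (instance-dependent) suboptimal arm. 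A Bretagnolle--Huber step on the worse of the two instances then gives $\bE[\Reg_\phi^{\mathrm{team}}(T)]=\Omega(\Delta\cdot N\Gamma/\Delta)=\Omega(N\Gamma)$; once the physical budget is spent the instances separate and no further regret is charged. The two bounds are obtained on possibly different instances, so the stated additive form follows from $x+y\le 2\max\{x,y\}$ and a relabeling of $c_1,c_2$ (or, if a single instance is preferred, by splitting the horizon into two phases and the budget accordingly).

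The step I expect to be the main obstacle is the calibration in the corruption construction: quantifying how fast the team's own clean observations would dilute the replicated contamination, and showing that the $N$-fold amplification survives this dilution within the global budget $\sum_{t,n}\|C_{n,t}\|\le\Gamma$. Because under append-all each arm's per-estimator sample count grows at rate $\Theta(N)$ per round, a naive estimate would suggest the contamination is overwhelmed after $O(\Gamma/(N\Delta))$ rounds, which would collapse the gain back to $O(\Gamma)$; the resolution is that the adversary pays only once per physical observation while contaminating all $N$ estimators simultaneously, so the correct accounting compares the replicated mass $\Gamma_{\mathrm{eff},k}$ against the per-estimator counts, and the prefix must be chosen to cover exactly the first $\Theta(\Gamma/\Delta)$ realized pulls of each arm so that every agent's index rule is still fooled when the budget runs out. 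Concretely I would track $\Gamma_{\mathrm{eff},k}$ as in Definition~\ref{def:rho_Geff}, use Lemma~\ref{lem:conc_eff} together with the usual optimism bookkeeping to argue that no append-all estimator's radius can fall below $\Delta/(2L)$ until $\Omega(N\Gamma/\Delta)$ suboptimal pulls have accumulated, and then close with the standard ``regret accrued before budget exhaustion is unavoidable'' argument. A secondary, routine point is verifying that the adaptive adversary's choice of prefix is measurable with respect to the public transcript and respects the timing in Section~\ref{subsec:ver_corr}, which holds because the prefix sets are defined purely by realized pull counts.
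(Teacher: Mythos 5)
Your overall architecture (two separate constructions combined via $\max\{a,b\}\ge(a+b)/2$, a standard $K$-armed Bretagnolle--Huber argument for the $\sqrt{KNT}$ floor, and a two-point prefix-corruption attack plus Le Cam for the corruption term) matches the paper's proof sketch, and the stochastic term is handled correctly. The problem is the step you yourself flag as ``the main obstacle'': your proposed resolution does not actually close it, and as written the corruption construction delivers only $\Omega(\Gamma)$, not $\Omega(N\Gamma)$.

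Here is the concrete failure. Under (S1) append-all, every agent's local estimator for arm $a$ contains the \emph{same} multiset of all $H_a(t)$ physical team samples of arm $a$; it is content-identical to the synchronized global estimator of (S2). If the adversary corrupts a prefix of $m$ physical samples by $\Theta(\Delta)$ each, the physical budget spent is $\Theta(m\Delta)=\Theta(\Gamma)$, and the bias it induces in \emph{each} local estimator is $m\Delta/H_a(t)=\Gamma/H_a(t)$ --- the replication puts the same corruption mass into $N$ copies, but each copy's denominator is the team count $H_a(t)$, which grows $N$ times per round. So indistinguishability (bias $\gtrsim\Delta$) ends once $H_a(t)\gtrsim\Gamma/\Delta$ \emph{team} pulls have accumulated, not $N\Gamma/\Delta$ as you assert; the Le Cam step then certifies only $\Omega(\Gamma/\Delta)$ confused team pulls and hence $\Omega(\Gamma)$ regret. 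Your accounting via $\Gamma_{\mathrm{eff},k}=N\Gamma_k$ inflates the numerator by $N$, but that factor is exactly cancelled because the $N$ (identical) estimators collectively drive only the $H_a(t)$ team pulls, not $N\cdot H_a(t)$ pulls. To genuinely obtain an $N\Gamma$ penalty you would need each agent to \emph{separately} waste $\Omega(\Gamma/\Delta)$ of its own pulls while fooled, which requires each agent's confusion clock to run on its \emph{local} pull count; append-all sharing destroys precisely that property by diluting the fixed physical corruption at the team rate. Your fallback of invoking Lemma~\ref{lem:conc_eff} and ``optimism bookkeeping'' does not help either: that lemma is an upper-bound tool, and using it to argue that radii \emph{cannot} shrink is not a lower-bound argument for all index rules in the class. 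To salvage the theorem you would need either an additional structural restriction on the (S1) algorithm class (e.g., local, non-corruption-aware confidence radii indexed by local rather than pooled counts) or a fundamentally different adversary; the prefix attack as described cannot produce the $N$-fold separation.
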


Please see Appendix~\ref{proofofTheoremthm:amp_lb} for the complete proof of Theorem~\ref{thm:amp_lb}.

\begin{proof}[Proof sketch]
Construct a two-point testing instance where identifying the best arm requires resolving a mean gap $\Delta$. An adversary spends corruption mass $\Theta(\Gamma)$ on early samples of the informative arm. Under (S1), the same corrupted samples enter $N$ local estimators, reducing $N$ estimators' effective KL simultaneously. Le Cam's method yields an additional error probability bounded away from zero unless the protocol spends $\Omega(N\Gamma)$ regret mass to compensate.
\end{proof}

\subsection{Verification and Certified Recommendation Sharing}\label{subsec:verified_sharing}

Verification provides a clean channel immune to corruption. The key multi-agent issue is how to convert scattered verified reward samples into \emph{network-wide reliable decisions} under limited communication among all agnets. For each arm $k\in[K]$, we define the verified-only empirical mean ($\max\{1,H_k^{\mathrm{ver}}(t)\}$ is used to avoid division by zero when no verification has occurred yet)
\begin{align}
\widehat\mu_k^{\mathrm{ver}}(t)\triangleq \frac{1}{\max\{1,H_k^{\mathrm{ver}}(t)\}} \sum_{\tau\le t}\sum_{n:k_{n,\tau}=k, V_{n,\tau}=1} R_{n,\tau},
\end{align}
which averages only the clean reward vectors $R_{n,\tau}$ collected from verified pulls of arm $k$ up to time $t$.

\begin{lemma}[Verified scalar certificate]\label{lem:cert}
Fix $\delta\in(0,1)$. With probability at least $1-\delta$, for all $k$ and all $t$, we have
\begin{align}
\big|\phi(\widehat\mu_k^{\mathrm{ver}}(t))-\theta_k\big| \le L\sqrt{\frac{\log(2dKNT/\delta)}{2\,\max\{1,H_k^{\mathrm{ver}}(t)\}}}.
\end{align}
\end{lemma}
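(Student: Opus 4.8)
The plan is to reduce the statement to an anytime $\ell_\infty$ concentration bound for the verified-only mean and then invoke the $L$-Lipschitz property of $\phi$. Since $\widehat\mu_k^{\mathrm{ver}}(t)$ averages \emph{clean} reward vectors $R_{n,\tau}$ only, the adversary never touches the data entering it, so no corruption-bias term appears and the entire difficulty is a uniform-in-$t$ concentration with a data-dependent sample count. Concretely, I would first establish that, with probability at least $1-\delta$, simultaneously for all $k$ and all $t$,
\[
\big\|\widehat\mu_k^{\mathrm{ver}}(t)-\mu_k\big\|_\infty \le \sqrt{\frac{\log(2dKNT/\delta)}{2\max\{1,H_k^{\mathrm{ver}}(t)\}}},
\]
and then apply $|\phi(x)-\phi(y)|\le L\|x-y\|_\infty$ with $y=\mu_k$ together with $\theta_k=\phi(\mu_k)$ to conclude.

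For the concentration step, fix an arm $k$ and a coordinate $i\in[d]$ and list the clean rewards collected from verified pulls of arm $k$ in chronological order of $(\tau,n)$, say $Y^{(k)}_1,Y^{(k)}_2,\dots\in[0,1]^d$ (extending by fresh $\cD_k$ draws past the realized count if necessary). Because $V_{n,\tau}$ and $k_{n,\tau}$ are $\cH_{n,\tau-1}$-measurable, hence fixed before $R_{n,\tau}\sim\cD_{k_{n,\tau}}$ is drawn, and the reward draw is conditionally independent of the past given the chosen arm, each $Y^{(k)}_m$ has conditional mean $\mu_k$ given the natural filtration up to just before its realization. Thus, for each fixed $m$, $S^{(k,i)}_m\triangleq\sum_{\ell\le m}\big(Y^{(k)}_{\ell,i}-\mu_{k,i}\big)$ is a martingale with increments lying in an interval of length $1$, and Azuma--Hoeffding gives $\bP\big(|S^{(k,i)}_m|\ge\eps\big)\le 2e^{-2\eps^2/m}$. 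Choosing $\eps=\sqrt{(m/2)\log(2dKNT/\delta)}$ makes the right-hand side at most $\delta/(dKNT)$, and a union bound over $k\in[K]$, $i\in[d]$, and $m\in\{1,\dots,NT\}$ (the number of verified pulls of any single arm never exceeds $NT$, and is also capped by $\nu$) — at most $dKNT$ events — yields $|\overline{Y}^{(k,i)}_m-\mu_{k,i}|\le\sqrt{\log(2dKNT/\delta)/(2m)}$ for all $k,i$ and all $m\ge1$ on a $(1-\delta)$-event.

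It then remains to read this off at the realized count. On that event, for any $t$ set $m=H_k^{\mathrm{ver}}(t)$: if $m\ge1$ then $\widehat\mu_k^{\mathrm{ver}}(t)$ equals the coordinate-wise average of the first $m$ verified rewards of arm $k$, so maximizing over $i$ gives the displayed $\ell_\infty$ bound and the Lipschitz property of $\phi$ finishes the proof; if $m=0$ then $\widehat\mu_k^{\mathrm{ver}}(t)=\mathbf{0}$ by the $\max\{1,\cdot\}$ convention, whence $\|\widehat\mu_k^{\mathrm{ver}}(t)-\mu_k\|_\infty\le1$, which is dominated by the stated radius whenever $\log(2dKNT/\delta)\ge2$, i.e., in all nontrivial regimes (and when it is not, the certificate is inactive since no verification has yet occurred). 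Equivalently, one may phrase the whole argument as a specialization of Lemma~\ref{lem:conc_eff} to an estimator that incorporates only the $H_k^{\mathrm{ver}}(t)$ verified (clean) samples of arm $k$, for which that estimator's effective corruption mass is $0$.

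The only genuinely non-routine point is that $H_k^{\mathrm{ver}}(t)$ is itself a random, adversary-influenced count — both the verification schedule and the arm pulls react to the public transcript — so a single application of Hoeffding at a fixed sample size is invalid; the martingale-plus-union-over-counts (``stitching'') device above is precisely what makes the inequality hold uniformly in $t$. A secondary subtlety is justifying that conditioning on $V_{n,\tau}=1$ does not bias $R_{n,\tau}$, which is exactly where we use that $V_{n,\tau}$ is chosen before, and given the arm independently of, the reward realization. Beyond these, the argument mirrors the clean part of the proof of Lemma~\ref{lem:conc_eff} with the corruption term deleted.
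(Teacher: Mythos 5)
Your proposal is correct and follows essentially the same route the paper indicates for this bound: coordinate-wise Hoeffding concentration for the verified (clean) samples, a union bound over the $d$ objectives, $K$ arms, and $NT$ possible sample counts, and then the $L$-Lipschitzness of $\phi$ under $\ell_\infty$; your martingale/stitching treatment of the random count $H_k^{\mathrm{ver}}(t)$ and the $m=0$ edge case are appropriate rigor additions rather than a different method. No gaps.
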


Please see Appendix~\ref{proofofLemmalem:cert} for the complete proof of Lemma~\ref{lem:cert}.

\begin{figure*}[t]
\centering
\begin{minipage}{.32\textwidth}
    \includegraphics[width=0.98\linewidth]{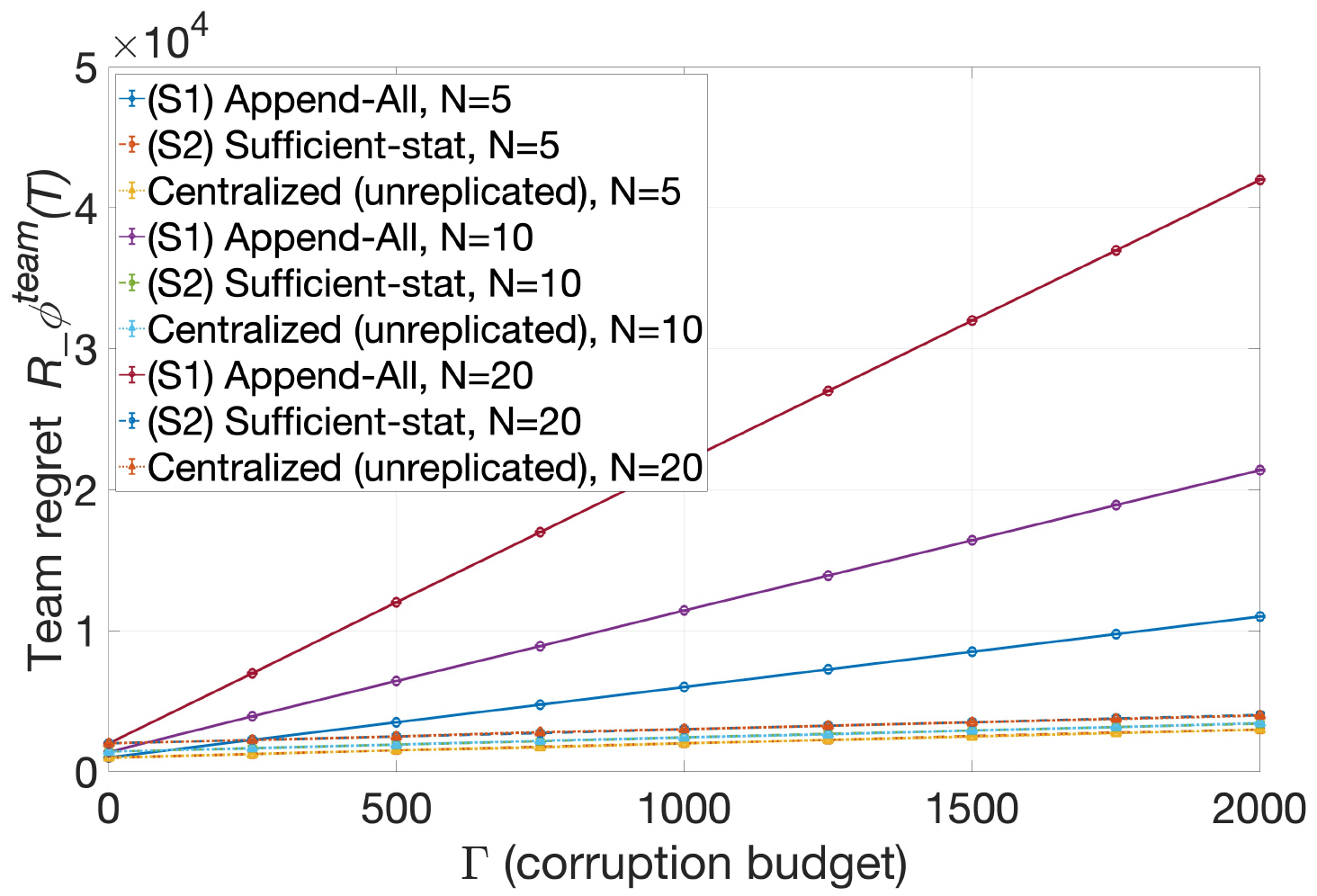}
    \caption{Team regret versus corruption budget $\Gamma$ for (S1) and (S2): (S1) scales like $\widetilde O(\sqrt{KNT}+N\Gamma)$ while (S2) scales like $\widetilde O(\sqrt{KNT}+\Gamma)$.}
    \label{fig:amp_vs_gamma}
\end{minipage}
\;
\begin{minipage}{.32\textwidth}
    \includegraphics[width=0.98\linewidth]{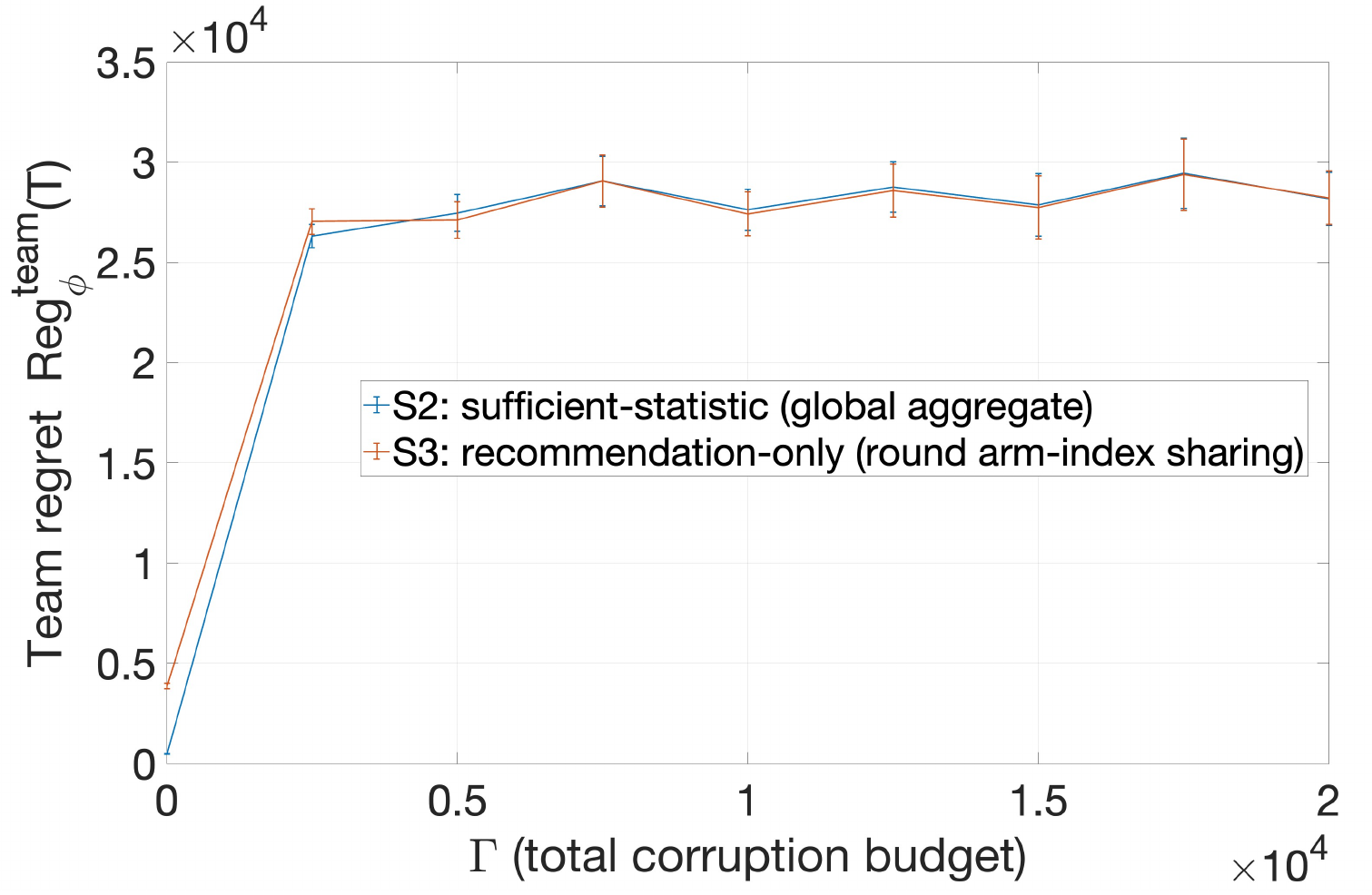}
    \caption{Team regret versus $\Gamma$ under (S2) and (S3): (S3) preserves the unamplified $O(\Gamma)$ term while trading off a clean-case coordination overhead.}
    \label{fig:s3_tradeoff}
\end{minipage}
\;
\begin{minipage}{.32\textwidth}
    \includegraphics[width=0.98\linewidth]{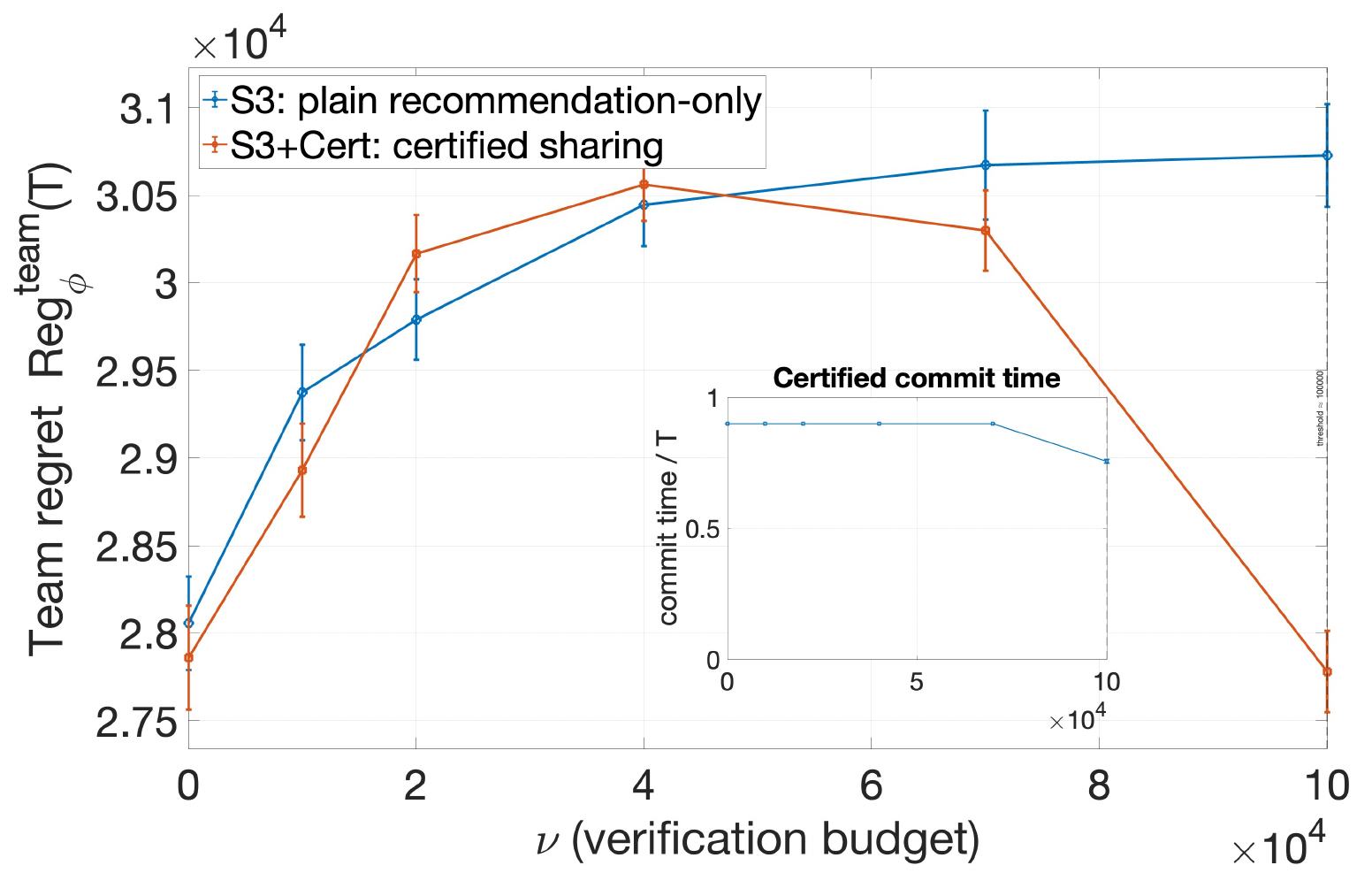}
    \caption{Team regret versus verification budget $\nu$ under (S3) with/without certified sharing in high-corruption regime $\Gamma=\Theta(NT)$: sharp improvement once $\nu \gtrsim K L^2 \Delta_{\min}^{-2}\log(dKNT)$.}
    \label{fig:verify_phase}
\end{minipage}
\end{figure*}

Under the recommendation-only communication model (S3), agents do not transmit reward vectors (which could replicate corrupted samples). However, pure recommendations $M_{n,t}$ can still be misleading under corruption. To make recommendations \emph{verifiable}, we allow an agent to optionally attach a certificate computed \emph{only from verified (clean) data}.

Concretely, at the end of round $t$, agent $n$ broadcasts a pair $(M_{n,t},\mathrm{cert}_{n,t})$ for $M_{n,t}\in[K]$, where $M_{n,t}$ is the arm it recommends (e.g., its current optimistic maximizer or most-played arm in a round). The certificate $\mathrm{cert}_{n,t}=\big(\LCB_{n,t},\UCB_{n,t}\big)$ is a scalar confidence interval for the \emph{true} scalarized value $\theta_{M_{n,t}}=\phi(\mu_{M_{n,t}})$ of the recommended arm, based solely on verified-only statistics. Specifically, we set $\LCB_{n,t}=\phi\!\big(\widehat\mu^{\mathrm{ver}}_{M_{n,t}}(t)\big)-\eps_{M_{n,t}}(t)$, $\UCB_{n,t}=\phi\!\big(\widehat\mu^{\mathrm{ver}}_{M_{n,t}}(t)\big)+\eps_{M_{n,t}}(t)$, where the half-width $\eps_k(t)\triangleq L\sqrt{\frac{\log(2dKNT/\delta)}{2\max\{1,H_k^{\mathrm{ver}}(t)\}}}$ follows from Hoeffding concentration applied coordinate-wise to verified rewards in $[0,1]^d$ (with a union bound over $d$ objectives, $K$ arms, and $NT$ agent-rounds) together with $L$-Lipschitzness of $\phi$ under $\ell_\infty$. Because $\mathrm{cert}_{n,t}$ depends only on verified samples, it is immune to arbitrary corruption on unverified rounds. Thus, even if an adversary can manipulate the unverified feedback that led agent $n$ to propose $M_{n,t}$, the interval $[\LCB_{n,t},\UCB_{n,t}]$ remains a valid high-probability bracket for $\theta_{M_{n,t}}$. Recipients can therefore \emph{filter} incoming recommendations by keeping only those with sufficiently tight certificates (large $H_k^{\mathrm{ver}}(t)$) and competitive lower bounds relative to others (formalized in the filtering rule below), ensuring that network-wide coordination is driven by clean evidence rather than corrupted impressions.

\begin{theorem}[High-corruption learnability via certified sharing]\label{thm:verified_sharing}
There exists a cooperative algorithm using (S3) recommendation-only sharing with optional verified certificates
and at most $\nu$ verifications such that:

\begin{enumerate}
\item (\emph{Always-valid robust baseline}) For all instances and adversaries with budget~\eqref{eq:global_budget},
\begin{align}
\Reg_\phi^{\mathrm{team}}(T)\le \widetilde O\big(L\cdot \Reg^{\mathrm{clean}}_{\mathrm{comm}}(K,N,T)+L\Gamma\big),
\end{align}
where $\Reg^{\mathrm{clean}}_{\mathrm{comm}}(K,N,T)$ is the regret overhead of the same coordination protocol in the clean case.

\item (\emph{Verification-driven override}) If $\Delta_{\min}\triangleq\min_{k\neq k^*}\Delta_k>0$ and
\begin{equation}\label{eq:cert_thresh}
\nu \ge c\,K\frac{L^2}{\Delta_{\min}^2}\log\!\Big(\frac{2dKNT}{\delta}\Big),
\end{equation}
then, with probability at least $1-\delta$, there exists $t^*\le T$ after which all agents play $k^*$ and
\begin{align}
& \Reg_\phi^{\mathrm{team}}(T) \le O(\nu\Delta_{\max}) \nonumber \\
& +\widetilde O\big(L\cdot \Reg^{\mathrm{clean}}_{\mathrm{comm}}(K,N,T)\wedge NT\Delta_{\max}\big),
\end{align}
independently of $\Gamma$ (even when $\Gamma=\Theta(NT)$).
\end{enumerate}
\end{theorem}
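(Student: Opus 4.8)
The plan is to combine the protocol-agnostic robust-UCB machinery with a small, adversary-proof verification burst. Concretely, I would run under (S3) a \emph{robust layer} --- a cooperative UCB of the type in Theorem~\ref{thm:meta_team_ub}, where each agent drives its index from its local corrupted estimators and broadcast recommendations are used only for coordination --- together with an optional \emph{verification layer} that, when budget is available, dedicates the first $\lceil \nu/N\rceil$ rounds to a fixed round-robin over $[K]$ in which every agent pulls and verifies. The override rule is a hard commit: as soon as the verified-only certificates separate (some arm's certified lower bound strictly exceeds every other arm's certified upper bound), all agents play that arm for the remainder of the horizon.

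For Part~1, since (S3) transmits no reward vectors, Lemma~\ref{lem:protocol_amp_clean} gives $\rho_{n,t}=1$, hence $\Gamma_{\mathrm{eff}}=\Gamma$; applying Theorem~\ref{thm:meta_team_ub} (equivalently Corollary~\ref{cor:S2S3_rate}) to the robust layer bounds its regret by $\widetilde O\big(L\sqrt{KNT}+L\Gamma\big)$, and the only extra cost is the coordination overhead of the recommendation protocol, which by construction is whatever that same protocol pays in the clean case, i.e.\ $L\cdot\Reg^{\mathrm{clean}}_{\mathrm{comm}}(K,N,T)$ (the $L$ arising from passing $\ell_\infty$ confidence widths through the Lipschitz scalarization via Lemma~\ref{lem:corner_ma}). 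This holds for every adversary respecting~\eqref{eq:global_budget}.

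For Part~2, the key observation is that the round-robin verification schedule is chosen \emph{before} any feedback, so it is $\cH$-measurable but cannot be diverted by the adversary, and verified samples are clean by definition~\eqref{eq:corrupted_obs_ma}. After $t^*\triangleq\lceil\nu/N\rceil$ rounds each arm has $H_k^{\mathrm{ver}}(t^*)\ge\lfloor\nu/K\rfloor$, so under~\eqref{eq:cert_thresh} every half-width obeys $\eps_k(t^*)\le\tfrac14\Delta_{\min}$; combining with Lemma~\ref{lem:cert} and a union bound, on an event of probability $\ge 1-\delta$ we have $|\phi(\widehat\mu_k^{\mathrm{ver}}(t^*))-\theta_k|\le\eps_k(t^*)$ for all $k$. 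Then $\LCB_{k^*}\ge\theta_{k^*}-2\eps_{k^*}(t^*)\ge\theta_{k^*}-\tfrac12\Delta_{\min}$ while for $k\neq k^*$, $\UCB_k\le\theta_k+2\eps_k(t^*)\le\theta_{k^*}-\tfrac12\Delta_{\min}$, so the separation condition holds with $k^*$ the unique certified winner. Since broadcasts are reliable and the certificates (and the verified statistics they are built from) are immune to corruption on unverified rounds, after $t^*$ every agent identifies and commits to $k^*$, contributing zero further regret. The pre-$t^*$ regret is at most $N\lceil\nu/N\rceil\Delta_{\max}=O(\nu\Delta_{\max})$ from the burst, plus the cost of disseminating and committing, which is at most the clean coordination overhead $\widetilde O\big(L\,\Reg^{\mathrm{clean}}_{\mathrm{comm}}(K,N,T)\big)$ and in any case at most the trivial $NT\Delta_{\max}$; none of these terms involves $\Gamma$, which gives the claim even when $\Gamma=\Theta(NT)$.

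The step I expect to be the main obstacle is the joint requirement on the verification layer: the schedule must be history-independent (otherwise a corruption-steered agent could waste verifications on bad arms, breaking the $\nu/K$-per-arm guarantee), yet the certified best arm must still propagate network-wide through the narrow (S3) recommendation-plus-certificate channel at a cost that is controlled in the clean case --- and one must argue the algorithm can hedge between the robust and verification layers so that its regret is bounded by the \emph{better} path without leaking a $\Gamma$ dependence into Part~2 beyond the $NT\Delta_{\max}$ cap. Making the constants in that dissemination/commit bound (and the exact handshake guaranteeing that all agents, not just one, hold the separating certificate) explicit is the portion that needs the most care.
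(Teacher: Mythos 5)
Your proposal is correct and follows essentially the same route as the paper's proof: verified-only certificates with half-width $\eps_k(t)\le \Delta_{\min}/4$ under the threshold \eqref{eq:cert_thresh}, LCB/UCB separation forcing certified filtering to retain only $k^*$, broadcast-driven commitment, and Part~1 via Lemma~\ref{lem:protocol_amp_clean} and Theorem~\ref{thm:meta_team_ub}. Your explicit round-robin verification burst and the $O(\nu\Delta_{\max})$ accounting are a reasonable instantiation of details the paper's sketch leaves implicit, and the remaining discrepancies (strict versus non-strict separation, $N\lceil\nu/N\rceil$ possibly exceeding $\nu$) are constant-factor matters absorbed by the constant $c$.
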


Theorem~\ref{thm:verified_sharing} gives a two-regime guarantee. In general, recommendation-only sharing avoids sample replication, so corruption contributes only an additive $O(\Gamma)$ term at the team level, plus $\Reg^{\mathrm{clean}}_{\mathrm{comm}}(K,N,T)$. Moreover, once $\nu \gtrsim \tilde{\Theta}(KL^2/\Delta_{\min}^2)$, verified certificates become tight enough to rule out suboptimal arms, since they rely only on clean samples, the adversary cannot forge them. Thus, the team can filter misleading recommendations and commit to $k^*$, making regret independent of $\Gamma$ even when $\Gamma=\Theta(NT)$. Please see Appendix~\ref{proofofTheoremthm:verified_sharing} for the complete proof of Theorem~\ref{thm:verified_sharing}.

\begin{proof}[Proof sketch (certificate dominance mechanism)]
Under \eqref{eq:cert_thresh}, the verified confidence width satisfies $\eps_k(t)\le \Delta_{\min}/4$ once $H_k^{\mathrm{ver}}(t)$ crosses threshold. By Lemma~\ref{lem:cert}, any certified suboptimal arm $k\neq k^*$ then satisfies $\UCB_k^{\mathrm{ver}}(t)\le \theta_k+\Delta_{\min}/4 < \theta_{k^*}-\Delta_{\min}/4 \le \LCB_{k^*}^{\mathrm{ver}}(t)$, so certified filtering retains only $k^*$. Because certificates depend solely on verified samples, corruption on unverified rounds cannot invalidate them. Broadcast ensures that once a valid $k^*$ certificate appears, all agents receive it and commit.
\end{proof}

\section{Numerical Results}\label{sec:experiments}

We empirically validate the communication-corruption-verification theory in Section~\ref{sec:mainresults}. Our experiments target four qualitative predictions, corruption amplification under raw-sample sharing (S1), centralized-rate robustness under sufficient-statistic sharing (S2), unamplified $O(\Gamma)$ robustness under recommendation-only sharing (S3) with reduced communication, and
high-corruption recovery under (S3) via certified sharing with verification budget $\nu$.

We simulate by setting $K=20$, $d=5$, $N\in\{5,10,20\}$, and $T=10^4$. Rewards are coordinate-wise Bernoulli: for each arm $k$, $R_{n,t}\sim \mathrm{Bernoulli}(\mu_k)$ independently across $n,t$ and coordinates, with $\mu_k\in[0,1]^d$. We test three monotone $L$-Lipschitz scalarizations: linear $\phi(x)=w^\top x$ ($w\in\Delta_d$), Chebyshev $\phi(x)=\min_j x^{(j)}$, and log-sum-exp $\phi(x)=\beta^{-1}\log\sum_j e^{\beta x^{(j)}}$. We compare all the three sharing modes (S1), (S2), and (S3). We report team regret $\Reg_\phi^{\mathrm{team}}(T)$, and average results over $50$ i.i.d.\ instances.

Figure~\ref{fig:amp_vs_gamma} plots $\Reg_\phi^{\mathrm{team}}(T)$ versus $\Gamma$ for (S1) and (S2). Consistent with Lemma~\ref{lem:protocol_amp_clean} and Theorem~\ref{thm:meta_team_ub}, (S1) exhibits an $N$-fold degradation in the corruption-dominated regime. The slope of regret versus $\Gamma$ scales approximately linearly with $N$. In contrast, (S2) closely tracks the centralized (unreplicated) benchmark and remains stable as $N$ increases.

Figure~\ref{fig:s3_tradeoff} compares (S3) recommendation-only sharing with (S2). As predicted by Corollary~\ref{cor:S2S3_rate} and Theorem~\ref{thm:verified_sharing}, (S3) avoids corruption amplification (the $\Gamma$-slope matches (S2)), but incurs an additional clean-case coordination overhead that is visible when $\Gamma$ is small. Communication is reduced from transmitting reward vectors or per-arm $d$-dimensional summaries to transmitting $O(N\log T)$ arm indices.

We set $\Gamma=\Theta(NT)$ and vary the verification budget~$\nu$, and Figure~\ref{fig:verify_phase} shows that plain (S3) recommendations can be misled by corrupted local estimates, yielding large regret even with moderate $\nu$. In contrast, certified sharing produces a sharp improvement once $\nu$ crosses the identification threshold in~\eqref{eq:cert_thresh}. Regret drops rapidly and the team commits to $k^{*}$ shortly thereafter, consistent with Theorem~\ref{thm:verified_sharing}.

\section{Conclusion}\label{sec:conclusion}

We studied cooperative $N$-agent stochastic multi-objective bandits under a global corruption budget $\Gamma$ and a verification budget $\nu$. Our main message is that robustness is jointly governed by \emph{communication} and \emph{corruption}: protocol-induced replication converts $\Gamma$ into an effective budget $\Gamma_{\mathrm{eff}}\in[\Gamma,\,N\Gamma]$, producing an $N$-fold gap between raw-sample sharing and summary/recommendation sharing. We proved a protocol-agnostic regret bound parameterized by $\Gamma_{\mathrm{eff}}$ for general monotone $L$-Lipschitz scalarizations, yielding tight corollaries for (S1)--(S3), and matching lower bounds showing the amplification under naive raw sharing is unavoidable. Finally, we identified a high-corruption regime $\Gamma=\Theta(NT)$ where sublinear team regret is impossible without clean information, and showed that verification restores learnability when shared as certified evidence: once $\nu$ exceeds the identification threshold, certificates enable reliable filtering and team-wide commitment with regret independent of $\Gamma$.

\bibliographystyle{ieeetr}
\bibliography{reference}

\newpage

\appendices

\section{Proof of Lemma \ref{lem:protocol_amp_clean}}\label{proofoflem:protocol_amp_clean}

\begin{proof}
We first make the multiplicity $\rho_{n,t}$ precise and then specialize it to the three protocols.

\subsection{Step 0: Formal Definition of Multiplicity}

Fix a cooperative algorithm $\mathsf{Alg}$ and fix a time $t\in[T]$. For each agent-round sample $(n,\tau)$ with $\tau\le t$, define its (protocol-induced) multiplicity
\begin{equation}\label{eq:def-rho}
\rho_{n,\tau}(t)
~\triangleq~
\sum_{n'=1}^N w_{n',t}(n,\tau),
\end{equation}
where $w_{n',t}(n,\tau)\ge 0$ is the \emph{weight} with which agent $n'$'s estimator at time $t$
uses the single observed vector $\widetilde R_{n,\tau}$ (or $R_{n,\tau}$ if verified) when forming its
arm-wise sufficient statistics (e.g., sums and counts) that determine its indices at time $t+1$.
In other words, for each arm $k$ and agent $n'$, the statistic used by $n'$ at time $t$ can be written as
\begin{align}
& S^{(n')}_{k}(t)
=
\sum_{\tau\le t}\sum_{n=1}^N w_{n',t}(n,\tau)\,\widetilde R_{n,\tau}\,\mathbf{1}\{k_{n,\tau}=k,V_{n,\tau}=0\}
\nonumber \\
& +
\sum_{\tau\le t}\sum_{n=1}^N w^{\mathrm{ver}}_{n',t}(n,\tau)\,R_{n,\tau}\,\mathbf{1}\{k_{n,\tau}=k,\,V_{n,\tau}=1\},
\end{align}
with the corresponding counts defined analogously by replacing $\widetilde R_{n,\tau}$ (resp.\ $R_{n,\tau}$) with $1$.
The weights $\{w_{n',t}(n,\tau)\}$ are completely determined by the communication/sharing rule and the deterministic
updates of $\mathsf{Alg}$; they do not depend on the reward distribution.
For the present lemma, we only need the following facts:
(i) under append-all updates each received raw sample is appended with unit weight;
(ii) under synchronized sufficient-statistic sharing there is a single global sum/count in which each raw sample appears once;
(iii) under recommendation-only sharing raw samples are never transmitted, hence a sample can only be used by its owner.

For conciseness, we will write $\rho_{n,\tau}$ instead of $\rho_{n,\tau}(T)$, i.e., the total multiplicity over the entire horizon,
since the proof below is pointwise in $t$ and hence also holds at $t=T$.

Given multiplicities $\{\rho_{n,t}\}$, the (protocol-dependent) effective corruption is
\begin{equation}\label{eq:Gamma-eff-proof}
\Gamma_{\mathrm{eff}}
~\triangleq~
\sum_{t=1}^T\sum_{n=1}^N \rho_{n,t}\,\|C_{n,t}\|_\infty.
\end{equation}
The environment-side budget is
\begin{equation}\label{eq:Gamma-budget-proof}
\Gamma
~\triangleq~
\sum_{t=1}^T\sum_{n=1}^N \|C_{n,t}\|_\infty.
\end{equation}

\subsection{Step 1: (S1) Raw-Sample Sharing with Append-All Implies $\rho_{n,t}=N$}

Under (S1), at the end of each round $t$, agent $n$ broadcasts the tuple $(k_{n,t}, \widetilde R_{n,t}, V_{n,t})$
(or $(k_{n,t},R_{n,t},V_{n,t})$ if verified). Under the append-all update rule, every recipient agent $n'\in[N]$
\emph{appends} each received raw sample into its local dataset as if it were an additional fresh sample with unit weight.
Therefore, for every agent-round pair $(n,t)$ and every recipient $n'\in[N]$, $w_{n',T}(n,t)=1$, (and similarly $w^{\mathrm{ver}}_{n',T}(n,t)=1$ on verified rounds). Plugging into \eqref{eq:def-rho} yields
\[
\rho_{n,t}
~=~
\sum_{n'=1}^N w_{n',T}(n,t)
~=~
\sum_{n'=1}^N 1
~=~
N.
\]
Substituting into \eqref{eq:Gamma-eff-proof} and using \eqref{eq:Gamma-budget-proof} gives
\[
\Gamma_{\mathrm{eff}}
~=~
\sum_{t=1}^T\sum_{n=1}^N N\,\|C_{n,t}\|_\infty
~=~
N\sum_{t=1}^T\sum_{n=1}^N \|C_{n,t}\|_\infty
~=~
N\Gamma.
\]

\subsection{Step 2: (S2) Synchronized Sufficient-Statistic Sharing Implies $\rho_{n,t}=1$}

Under (S2), each agent $n$ broadcasts the per-arm cumulative statistics
$\big(H_{n,k}(t), S_{n,k}(t), H^{\mathrm{ver}}_{n,k}(t), S^{\mathrm{ver}}_{n,k}(t)\big)_{k\in[K]}$.
The protocol specifies that all agents compute indices using the \emph{same} synchronized global aggregates
$H_k(t)=\sum_{n=1}^N H_{n,k}(t), S_k(t)=\sum_{n=1}^N S_{n,k}(t), H^{\mathrm{ver}}_k(t)=\sum_{n=1}^N H^{\mathrm{ver}}_{n,k}(t), S^{\mathrm{ver}}_k(t)=\sum_{n=1}^N S^{\mathrm{ver}}_{n,k}(t).$
Crucially, by the definitions of $H_{n,k}(t)$ and $S_{n,k}(t)$ in the formulation,
each raw observation $\widetilde R_{n,\tau}$ contributes \emph{exactly once} to exactly one summand in $S_{k_{n,\tau}}(t)$,
namely to $S_{n,k_{n,\tau}}(t)$, and hence exactly once to the global sum $S_{k_{n,\tau}}(t)$.
There is no mechanism that duplicates $\widetilde R_{n,\tau}$ across multiple $S_{n',k}$'s, because agents transmit only
already-aggregated sums/counts and the global aggregate is formed by addition, not by appending all raw samples into each local dataset.

Formally, for each $(n,\tau)$, the global sum $S_{k}(t)$ (and similarly $S_k^{\mathrm{ver}}(t)$) can be written as
\[
S_k(t)
~=~
\sum_{\tau'\le t}\sum_{n'=1}^N \widetilde R_{n',\tau'}\,\mathbf{1}\{k_{n',\tau'}=k,\,V_{n',\tau'}=0\},
\]
where the coefficient of $\widetilde R_{n,\tau}$ is exactly $1$ if $(k_{n,\tau}=k, V_{n,\tau}=0)$ and $0$ otherwise.
Thus, in the weight notation, for the estimator actually used to form indices, we have $w_{n',t}(n,\tau)=\mathbf{1}\{n'=n^*\}$, for some single (conceptual) synchronized estimator $n^*$, or equivalently (and more simply): the aggregate uses each sample with unit weight exactly once.
Hence $\rho_{n,\tau}=1$ for all $(n,\tau)$.
Substituting into \eqref{eq:Gamma-eff-proof} yields
\[
\Gamma_{\mathrm{eff}}
~=~
\sum_{t=1}^T\sum_{n=1}^N 1\cdot \|C_{n,t}\|_\infty
~=~
\Gamma.
\]

\subsection{Step 3: (S3) Recommendation-Only Sharing Implies $\rho_{n,t}=1$}

Under (S3), agents broadcast only an arm index $M_n(t)\in[K]$ (optionally with a certificate based on verified-only statistics),
and \emph{no reward vectors} $\widetilde R_{n,t}$ (or $R_{n,t}$) are ever transmitted.
Therefore, the only agent that can possibly include $\widetilde R_{n,t}$ in any estimator is the originating agent $n$ itself;
other agents do not observe $\widetilde R_{n,t}$ and cannot reconstruct it from $M_n(t)$.
Consequently, for every $(n,t)$ and for every other agent $n'\neq n$, $w_{n',T}(n,t)=0$, while $w_{n,T}(n,t)=1$ because agent $n$ uses its own sample once in its own local sufficient statistics.
Hence, \eqref{eq:def-rho} gives
\[
\rho_{n,t}
~=~
\sum_{n'=1}^N w_{n',T}(n,t)
~=~
1.
\]
Substituting into \eqref{eq:Gamma-eff-proof} yields $\Gamma_{\mathrm{eff}}=\Gamma$.

Combining Steps 1--3 proves the stated values of $\rho_{n,t}$ and $\Gamma_{\mathrm{eff}}$ for (S1)--(S3).
\end{proof}

\section{Proof of Lemma~\ref{lem:conc_eff}}\label{proofofLemmalem:conc_eff}

\begin{proof}
We prove a coordinate-wise bound and then take a union bound over coordinates, arms, times, and estimators.

\subsection{Step 0: Estimator/Sample-Set Notation}

Fix an estimator index $j$, an arm $k$, and a time $t$.
Let $\cI_{j,k}(t)$ denote the (multi-)set of agent-round indices whose observations are used with unit weight
by estimator $j$ to form the empirical mean of arm $k$ up to time $t$.\footnote{If the protocol duplicates samples (e.g., append-all under (S1)),
then $\cI_{j,k}(t)$ is naturally viewed as a multiset that may contain multiple copies of the same agent-round observation.}
Let $m \triangleq |\cI_{j,k}(t)|$. If $m=0$, then $\widehat\mu_{j,k}(t)$ can be defined arbitrarily (e.g., $0$); since $\|\widehat\mu_{j,k}(t)-\mu_k\|_\infty\le 1$
and the right-hand side of \eqref{eq:conc_eff} is at least $\sqrt{\log(2dKNT/\delta)/2}$, the bound holds trivially.
Henceforth assume $m\ge 1$.

Index the elements of $\cI_{j,k}(t)$ as $\{(n_r,\tau_r)\}_{r=1}^m$ (including repetitions if any).
For each $r\in[m]$, the underlying clean reward is $R_{n_r,\tau_r}\in[0,1]^d$ and the (possibly corrupted) observed reward is
\[
\widetilde R_{n_r,\tau_r}
~=~
\begin{cases}
R_{n_r,\tau_r}, & \text{if } V_{n_r,\tau_r}=1,\\
\Pi_{[0,1]^d}\!\big(R_{n_r,\tau_r}+C_{n_r,\tau_r}\big), & \text{if } V_{n_r,\tau_r}=0.
\end{cases}
\]
Then, the empirical mean used by estimator $j$ for arm $k$ at time $t$ is
\[
\widehat\mu_{j,k}(t)
~=~
\frac{1}{m}\sum_{r=1}^m \widetilde R_{n_r,\tau_r}.
\]

\subsection{Step 1: Decomposition Into Stochastic Fluctuation and Corruption Bias}

Using $\mu_k=\bE[R\mid k]$ and adding/subtracting clean rewards,
\begin{align}\label{eq:decomp}
\widehat\mu_{j,k}(t)-\mu_k
~=~
\underbrace{\frac{1}{m}\sum_{r=1}^m \big(R_{n_r,\tau_r}-\mu_k\big)}_{\text{stochastic term}} \nonumber \\
+
\underbrace{\frac{1}{m}\sum_{r=1}^m \big(\widetilde R_{n_r,\tau_r}-R_{n_r,\tau_r}\big)}_{\text{corruption term}}.
\end{align}
Taking $\|\cdot\|_\infty$ and using the triangle inequality gives
\begin{align}\label{eq:tri}
\big\|\widehat\mu_{j,k}(t)-\mu_k\big\|_\infty
\le
\left\|\frac{1}{m}\sum_{r=1}^m (R_{n_r,\tau_r}-\mu_k)\right\|_\infty
\nonumber \\
+
\left\|\frac{1}{m}\sum_{r=1}^m (\widetilde R_{n_r,\tau_r}-R_{n_r,\tau_r})\right\|_\infty.
\end{align}

\subsection{Step 2: Bounding the Corruption Term by $\Gamma_{\mathrm{eff},k}/m$}

We claim that for every $(n,\tau)$,
\begin{equation}\label{eq:proj_lip}
\|\widetilde R_{n,\tau}-R_{n,\tau}\|_\infty \;\le\; \|C_{n,\tau}\|_\infty,
\end{equation}
where we adopt the convention $C_{n,\tau}\equiv 0$ on verified rounds ($V_{n,\tau}=1$), so that \eqref{eq:proj_lip} holds uniformly.
Indeed, coordinate-wise projection onto $[0,1]$ is $1$-Lipschitz and fixes $R_{n,\tau}^{(\ell)}\in[0,1]$, hence
\begin{align}
\big|\widetilde R_{n,\tau}^{(\ell)}-R_{n,\tau}^{(\ell)}\big|
=
\big|\Pi_{[0,1]}\!\big(R_{n,\tau}^{(\ell)}+C_{n,\tau}^{(\ell)}\big)-\Pi_{[0,1]}\!\big(R_{n,\tau}^{(\ell)}\big)\big| \nonumber \\
\le |C_{n,\tau}^{(\ell)}|
\le \|C_{n,\tau}\|_\infty,
\end{align}
and taking the maximum over $\ell\in[d]$ yields \eqref{eq:proj_lip}.

Therefore,
\begin{align}\label{eq:corr_term_bound_1}
\left\|\frac{1}{m}\sum_{r=1}^m (\widetilde R_{n_r,\tau_r}-R_{n_r,\tau_r})\right\|_\infty
& \le
\frac{1}{m}\sum_{r=1}^m \|\widetilde R_{n_r,\tau_r}-R_{n_r,\tau_r}\|_\infty \nonumber \\
& \le
\frac{1}{m}\sum_{r=1}^m \|C_{n_r,\tau_r}\|_\infty.
\end{align}

We now relate the right-hand side to $\Gamma_{\mathrm{eff},k}$.
Let $w_{j}(n,\tau;t)\in\{0,1,2,\dots\}$ denote the number of times the agent-round observation $(n,\tau)$ appears in the multiset $\cI_{j,k}(t)$.
Then $m = \sum_{\tau'\le t}\sum_{n'=1}^N w_j(n',\tau';t)\,\mathbf{1}\{k_{n',\tau'}=k\}$, and
\begin{align}
& \sum_{r=1}^m \|C_{n_r,\tau_r}\|_\infty
\nonumber \\
& =
\sum_{\tau'\le t}\sum_{n'=1}^N w_j(n',\tau';t)\,\|C_{n',\tau'}\|_\infty\,\mathbf{1}\{k_{n',\tau'}=k\}.
\end{align}
By definition of the protocol-induced multiplicity $\rho_{n,\tau}$ (total weight across all estimators/statistics), we have
$w_j(n,\tau;t)\le \rho_{n,\tau}$ for all $(n,\tau)$.
Hence,
\begin{align}\label{eq:corr_term_bound_2}
\sum_{r=1}^m \|C_{n_r,\tau_r}\|_\infty
& \le
\sum_{\tau'\le t}\sum_{n'=1}^N \rho_{n',\tau'}\,\|C_{n',\tau'}\|_\infty\,\mathbf{1}\{k_{n',\tau'}=k\} \nonumber \\
& \le
\Gamma_{\mathrm{eff},k},
\end{align}
where $\Gamma_{\mathrm{eff},k}$ is the (arm-wise) effective corruption defined by
$\Gamma_{\mathrm{eff},k}\triangleq\sum_{\tau=1}^T\sum_{n=1}^N \rho_{n,\tau}\|C_{n,\tau}\|_\infty\mathbf{1}\{k_{n,\tau}=k\}$.
Combining \eqref{eq:corr_term_bound_1}--\eqref{eq:corr_term_bound_2} yields
\begin{equation}\label{eq:corr_term_final}
\left\|\frac{1}{m}\sum_{r=1}^m (\widetilde R_{n_r,\tau_r}-R_{n_r,\tau_r})\right\|_\infty
\;\le\;
\frac{\Gamma_{\mathrm{eff},k}}{m}.
\end{equation}

\subsection{Step 3: Concentration of the Stochastic Term (Coordinate-Wise Hoeffding)}

Fix a coordinate $\ell\in[d]$ and consider the scalar random variables
\[
X_r \;\triangleq\; R_{n_r,\tau_r}^{(\ell)}\in[0,1],
\qquad r=1,\dots,m.
\]
Conditioned on the event $\{k_{n_r,\tau_r}=k\}$ for all $r$ (which holds by construction of $\cI_{j,k}(t)$),
each $X_r$ is distributed according to the $\ell$-th coordinate marginal of $\cD_k$ and has mean $\mu_k^{(\ell)}$.
Moreover, since clean rewards are independent across agents and times conditioned on the chosen arms, the collection $\{X_r\}_{r=1}^m$
is independent conditioned on $\cI_{j,k}(t)$ (even though $\cI_{j,k}(t)$ is random and adaptively chosen).
Therefore, Hoeffding's inequality applies conditionally on $\cI_{j,k}(t)$:
for any $\varepsilon>0$,
\[
\Pr\!\left(
\left|\frac{1}{m}\sum_{r=1}^m (X_r-\mu_k^{(\ell)})\right|\ge \varepsilon
~\Big|~ \cI_{j,k}(t)
\right)
~\le~ 2\exp(-2m\varepsilon^2).
\]
Let $\delta' \triangleq \delta/(dKNT)$ and set
\[
\varepsilon(m) \triangleq \sqrt{\frac{\log(2/\delta')}{2m}}
=
\sqrt{\frac{\log(2dKNT/\delta)}{2m}}.
\]
Then the conditional probability above is at most $\delta'$. Removing the conditioning gives
\begin{equation}\label{eq:hoeff_one}
\Pr\!\left(
\left|\frac{1}{m}\sum_{r=1}^m (R_{n_r,\tau_r}^{(\ell)}-\mu_k^{(\ell)})\right|
\ge
\sqrt{\frac{\log(2dKNT/\delta)}{2m}}
\right)
\;\le\; \delta'.
\end{equation}

\subsection{Step 4: Union Bound Over $(j,k,t,\ell)$}

There are at most $N$ estimators $j$, $K$ arms, $T$ times, and $d$ coordinates.
Applying \eqref{eq:hoeff_one} and a union bound over all $(j,k,t,\ell)$ shows that with probability at least
$1-dKNT\cdot \delta' = 1-\delta$, simultaneously for all $j,k,t$ and all $\ell\in[d]$,
\[
\left|\frac{1}{m}\sum_{r=1}^m (R_{n_r,\tau_r}^{(\ell)}-\mu_k^{(\ell)})\right|
\le
\sqrt{\frac{\log(2dKNT/\delta)}{2m}}.
\]
Equivalently,
\begin{equation}\label{eq:stoch_term_final}
\left\|\frac{1}{m}\sum_{r=1}^m (R_{n_r,\tau_r}-\mu_k)\right\|_\infty
\le
\sqrt{\frac{\log(2dKNT/\delta)}{2m}}.
\end{equation}

\subsection{Step 5: Combine the Bounds}

Plugging \eqref{eq:stoch_term_final} and \eqref{eq:corr_term_final} into \eqref{eq:tri} yields, on the same high-probability event,
\[
\big\|\widehat\mu_{j,k}(t)-\mu_k\big\|_\infty
\le
\sqrt{\frac{\log(2dKNT/\delta)}{2m}} + \frac{\Gamma_{\mathrm{eff},k}}{m}.
\]
Finally, replacing $m$ by $\max\{1,m\}$ accommodates the case $m=0$ discussed in Step~0, giving \eqref{eq:conc_eff}.
\end{proof}

\section{Proof of Lemma \ref{lem:corner_ma}}\label{proofofLemmalem:corner_ma}

\begin{proof}
We first check that $x^{\max}$ is well-defined and belongs to $[0,1]^d$, then show it is (coordinate-wise) an upper bound for $\cC$, and finally use monotonicity of $\phi$ to identify the supremum.

\subsection{Step 1: Well-Definedness of $x^{\max}$ and Basic Properties}

Fix any coordinate $j\in[d]$. Since $\cC\subseteq[0,1]^d$, the set
\[
\cC_j \;\triangleq\; \{x_j:\ x\in\cC\}
\]
is nonempty (unless $\cC=\emptyset$; see the remark at the end) and is bounded above by $1$.
Hence the supremum
\[
x^{\max}_j \;\triangleq\; \sup \cC_j
\]
satisfies $0\le x^{\max}_j\le 1$. Therefore $x^{\max}\in[0,1]^d$.

\subsection{Step 2: $x^{\max}$ Dominates Every Point in $\cC$}

By definition of supremum, for every $x\in\cC$ and every coordinate $j$,
\[
x_j \le \sup\{z_j:\ z\in\cC\} = x^{\max}_j.
\]
Thus, $x\preceq x^{\max}$ holds for all $x\in\cC$.

\subsection{Step 3: Upper Bound on the Objective: $\sup_{x\in\cC}\phi(x)\le \phi(x^{\max})$}

Since $\phi$ is coordinate-wise nondecreasing, $x\preceq x^{\max}$ implies $\phi(x)\le \phi(x^{\max})$.
Applying this to every $x\in\cC$ yields
\[
\sup_{x\in\cC}\phi(x) \;\le\; \phi(x^{\max}).
\]

\subsection{Step 4: Achievability of $\phi(x^{\max})$ as the Supremum}

We show that values of $\phi$ on $\cC$ can get arbitrarily close to $\phi(x^{\max})$.

Fix $\varepsilon>0$. For each coordinate $j\in[d]$, by the definition of $x^{\max}_j$ as a supremum, there exists $x^{(j)}\in\cC$ such that
\[
x^{(j)}_j \;\ge\; x^{\max}_j - \varepsilon.
\]
Define the coordinate-wise maximum (join) $y \triangleq \bigvee_{j=1}^d x^{(j)}$ where $y_\ell \triangleq \max_{j\in[d]} x^{(j)}_\ell \ \ \forall \ell\in[d]$. Then, for each coordinate $\ell$,
\[
y_\ell \;\ge\; x^{(\ell)}_\ell \;\ge\; x^{\max}_\ell-\varepsilon,
\]
so
\[
x^{\max}-\varepsilon\mathbf{1} \;\preceq\; y \;\preceq\; \mathbf{1}.
\]

It remains to justify $y\in\cC$. This is where \emph{upper-closedness} is used.
Pick any $x^{(1)}\in\cC$ (one of the selected points). By construction, $x^{(1)}\preceq y\preceq \mathbf{1}$.
Since $\cC$ is upper-closed, $x^{(1)}\in\cC$ and $x^{(1)}\preceq y\preceq\mathbf{1}$ imply $y\in\cC$.

Now apply monotonicity of $\phi$:
\[
\phi(y)\;\ge\;\phi(x^{\max}-\varepsilon\mathbf{1}).
\]
Taking supremum over $x\in\cC$ gives
\[
\sup_{x\in\cC}\phi(x) \;\ge\; \phi(y) \;\ge\; \phi(x^{\max}-\varepsilon\mathbf{1}).
\]

Finally, since $\phi$ is $L$-Lipschitz under $\|\cdot\|_\infty$ on $[0,1]^d$,
\[
\big|\phi(x^{\max})-\phi(x^{\max}-\varepsilon\mathbf{1})\big|
\;\le\;
L\| \varepsilon\mathbf{1}\|_\infty
\;=\;
L\varepsilon,
\]
so
\[
\phi(x^{\max}-\varepsilon\mathbf{1})
\;\ge\;
\phi(x^{\max})-L\varepsilon.
\]
Therefore,
\[
\sup_{x\in\cC}\phi(x)
\;\ge\;
\phi(x^{\max})-L\varepsilon.
\]
Since $\varepsilon>0$ is arbitrary, letting $\varepsilon\downarrow 0$ yields
\[
\sup_{x\in\cC}\phi(x)\;\ge\;\phi(x^{\max}).
\]

\subsection{Step 5: Combine}

Together with Step 3, we conclude $\sup_{x\in\cC}\phi(x)=\phi(x^{\max})$.

\paragraph{Specialization to $\ell_\infty$ rectangles}
Let $\cC=\{x:\|x-\widehat\mu\|_\infty\le\beta\}\cap[0,1]^d$.
Then for each coordinate $j$,
\[
\sup\{x_j:\ x\in\cC\} = \min\{1,\widehat\mu_j+\beta\},
\]
because the $\ell_\infty$ constraint allows increasing coordinate $j$ up to $\widehat\mu_j+\beta$, and the intersection with $[0,1]^d$
caps it at $1$. Hence
\[
x^{\max} = \Pi_{[0,1]^d}(\widehat\mu+\beta\mathbf{1}),
\]
and the first part implies
\[
\sup_{x\in\cC}\phi(x)=\phi\!\big(\Pi_{[0,1]^d}(\widehat\mu+\beta\mathbf{1})\big).
\]

\paragraph{Remark (empty set)}
If $\cC=\emptyset$, then $\sup_{x\in\cC}\phi(x)=-\infty$ by convention, while $x^{\max}$ is undefined.
In our application, confidence sets are nonempty (they at least intersect $[0,1]^d$), so this case does not arise.
\end{proof}

\section{Proof of Theorem~\ref{thm:meta_team_ub}}\label{proofofLemmathm:meta_team_ub}

\begin{proof}
Fix $\delta\in(0,1)$. Let $\cE$ denote the ''good`` event on which the vector mean concentration bound in
Lemma~\ref{lem:conc_eff} holds simultaneously for all estimators $j$, all arms $k$, and all times $t$; namely,
\begin{equation}\label{eq:good_event}
\big\|\widehat\mu_{j,k}(t)-\mu_k\big\|_\infty
\le
\beta_{j,k}(t)
\qquad
\forall (j,k,t),
\end{equation}
where $\beta_{j,k}(t)$ is the radius used by the protocol (in particular, it upper bounds the stochastic fluctuation term
and the bias term induced by the arm-wise effective corruption $\Gamma_{\mathrm{eff},k}$).
By Lemma~\ref{lem:conc_eff}, $\Pr(\cE)\ge 1-\delta$.

Throughout the proof, we condition on $\cE$.

\subsection{Step 1: UCB Indices are Valid Upper Bounds on $\theta_k=\phi(\mu_k)$}

A UCB-type protocol maintains, for each estimator $j$ and arm $k$ at time $t$, an empirical mean $\widehat\mu_{j,k}(t)$
and uses the optimistic index
\begin{equation}\label{eq:ucb_index_proof}
U_{j,k}(t)
~\triangleq~
\phi\!\Big(\Pi_{[0,1]^d}\big(\widehat\mu_{j,k}(t)+\beta_{j,k}(t)\mathbf{1}\big)\Big).
\end{equation}
On $\cE$, \eqref{eq:good_event} implies coordinate-wise
\[
\mu_k \preceq \widehat\mu_{j,k}(t)+\beta_{j,k}(t)\mathbf{1}.
\]
Since $\mu_k\in[0,1]^d$, applying coordinate-wise projection preserves dominance:
\[
\mu_k \preceq \Pi_{[0,1]^d}\big(\widehat\mu_{j,k}(t)+\beta_{j,k}(t)\mathbf{1}\big).
\]
Because $\phi$ is coordinate-wise nondecreasing, for all $j,k,t$,
\begin{equation}\label{eq:ucb_valid}
\theta_k=\phi(\mu_k)
\le
\phi\!\Big(\Pi_{[0,1]^d}\big(\widehat\mu_{j,k}(t)+\beta_{j,k}(t)\mathbf{1}\big)\Big)
=
U_{j,k}(t).
\end{equation}

\subsection{Step 2: Instantaneous Team Regret is Controlled by the Chosen Radius}

Fix an agent-round decision $(n,t)$, and let $j=j(n,t)$ be the estimator used to form the arm index at that decision.
Let $k_{n,t}$ be the arm selected by the protocol. Since the protocol is UCB-type,
\[
k_{n,t}\in\arg\max_{k\in[K]} U_{j,k}(t).
\]
Let $k^*\in\arg\max_k \theta_k$ be an optimal arm. Then
\[
U_{j,k_{n,t}}(t)\ge U_{j,k^*}(t)\ge \theta_{k^*},
\]
where the last inequality uses \eqref{eq:ucb_valid}. Therefore the instantaneous scalarized regret satisfies
\begin{align}
\theta_{k^*}-\theta_{k_{n,t}}
&\le
U_{j,k_{n,t}}(t)-\theta_{k_{n,t}}.
\label{eq:inst_regret_reduce}
\end{align}
We now bound $U_{j,k}(t)-\theta_k$ in terms of $\beta_{j,k}(t)$.
Using \eqref{eq:ucb_index_proof}, Lipschitzness of $\phi$ under $\|\cdot\|_\infty$, and nonexpansiveness of projection
under $\|\cdot\|_\infty$,
\begin{align}
U_{j,k}(t)-\theta_k
&=
\phi\!\Big(\Pi_{[0,1]^d}\big(\widehat\mu_{j,k}(t)+\beta_{j,k}(t)\mathbf{1}\big)\Big) - \phi(\mu_k)
\nonumber\\
&\le
L\Big\|
\Pi_{[0,1]^d}\big(\widehat\mu_{j,k}(t)+\beta_{j,k}(t)\mathbf{1}\big)-\mu_k
\Big\|_\infty
\nonumber\\
&\le
L\Big\|
\widehat\mu_{j,k}(t)+\beta_{j,k}(t)\mathbf{1}-\mu_k
\Big\|_\infty
\nonumber\\
&\le
L\big\|\widehat\mu_{j,k}(t)-\mu_k\big\|_\infty + L\,\beta_{j,k}(t)
\;\le\;
2L\,\beta_{j,k}(t),
\label{eq:ucb_gap_bound}
\end{align}
where the last step uses \eqref{eq:good_event}.
Combining \eqref{eq:inst_regret_reduce} and \eqref{eq:ucb_gap_bound} yields, for every agent-round $(n,t)$,
\begin{equation}\label{eq:inst_regret_beta}
\theta_{k^*}-\theta_{k_{n,t}}
\le
2L\,\beta_{j(n,t),\,k_{n,t}}(t).
\end{equation}

\subsection{Step 3: Summing Over all $NT$ Pulls and Bounding the Stochastic Part}

Summing \eqref{eq:inst_regret_beta} over all $(n,t)$ gives
\begin{equation}\label{eq:sum_beta_start}
\Reg_\phi^{\mathrm{team}}(T)
\le
2L\sum_{t=1}^T\sum_{n=1}^N \beta_{j(n,t),\,k_{n,t}}(t).
\end{equation}
We upper bound the RHS using the explicit form of the radii (as in Lemma~\ref{lem:conc_eff}):
\[
\beta_{j,k}(t)
\;\le\;
\sqrt{\frac{\log(2dKNT/\delta)}{2\,\max\{1,|\cI_{j,k}(t)|\}}}
+\frac{\Gamma_{\mathrm{eff},k}}{\max\{1,|\cI_{j,k}(t)|\}}.
\]
For the stochastic term, define $m\mapsto a(m)\triangleq \sqrt{\log(2dKNT/\delta)/(2m)}$.
Every time an estimator uses an arm-$k$ observation as a unit-weight sample, the corresponding counter $|\cI_{j,k}(\cdot)|$ increases by $1$.
Hence, when we sum $a(|\cI_{j,k}(t)|)$ over all agent-rounds in which arm $k$ is played (and the chosen estimator uses its own observation),
the denominators can be pessimistically upper bounded by the harmonic sequence $1,2,\dots,N_k^{\mathrm{team}}(T)$.
Therefore,
\begin{align}
& \sum_{t=1}^T\sum_{n=1}^N
\sqrt{\frac{\log(2dKNT/\delta)}{2\,\max\{1,|\cI_{j(n,t),k_{n,t}}(t)|\}}} \nonumber \\
& \le
\sqrt{\frac{\log(2dKNT/\delta)}{2}}
\sum_{k=1}^K\sum_{r=1}^{N_k^{\mathrm{team}}(T)} \frac{1}{\sqrt{r}}
\nonumber\\
&\le
\sqrt{\frac{\log(2dKNT/\delta)}{2}}
\sum_{k=1}^K 2\sqrt{N_k^{\mathrm{team}}(T)}
\nonumber\\
&\le
2\sqrt{\frac{\log(2dKNT/\delta)}{2}}
\sqrt{K\sum_{k=1}^K N_k^{\mathrm{team}}(T)}
\nonumber\\
&=
2\sqrt{\frac{\log(2dKNT/\delta)}{2}}\sqrt{KNT},
\label{eq:stoch_sum_bound}
\end{align}
where we used $\sum_{r=1}^m r^{-1/2}\le 2\sqrt{m}$ and Cauchy--Schwarz.

\subsection{Step 4: Bounding the Corruption Part via a Harmonic Sum}

Similarly, for the corruption term,
\begin{align}
& \sum_{t=1}^T\sum_{n=1}^N
\frac{\Gamma_{\mathrm{eff},k_{n,t}}}{\max\{1,|\cI_{j(n,t),k_{n,t}}(t)|\}}
\le
\sum_{k=1}^K \Gamma_{\mathrm{eff},k}
\sum_{r=1}^{N_k^{\mathrm{team}}(T)} \frac{1}{r}
\nonumber\\
&\le
\sum_{k=1}^K \Gamma_{\mathrm{eff},k}\,\log\!\big(1+N_k^{\mathrm{team}}(T)\big)
\nonumber\\
&\le
\log(1+NT)\sum_{k=1}^K \Gamma_{\mathrm{eff},k}
=
\Gamma_{\mathrm{eff}}\log(1+NT),
\label{eq:corr_sum_bound}
\end{align}
where we used $\sum_{r=1}^m \frac1r \le \log(1+m)$ and $\sum_k \Gamma_{\mathrm{eff},k}=\Gamma_{\mathrm{eff}}$.

\subsection{Step 5: Combine}

Plugging \eqref{eq:stoch_sum_bound} and \eqref{eq:corr_sum_bound} into \eqref{eq:sum_beta_start} yields
\begin{align}
& \Reg_\phi^{\mathrm{team}}(T) \nonumber \\
& \le
2L\Bigg(
2\sqrt{\frac{\log(2dKNT/\delta)}{2}}\sqrt{KNT}
+
\Gamma_{\mathrm{eff}}\log(1+NT)
\Bigg).
\end{align}
Absorbing numerical factors into a universal constant $c>0$ gives \eqref{eq:teamub_meta}.
\end{proof}

\section{Proof of Theorem~\ref{thm:amp_lb}}\label{proofofTheoremthm:amp_lb}

\begin{proof}
We prove the claim for the scalar case $d=1$ with $\phi(x)=x$ (so $\Reg_\phi^{\mathrm{team}}(T)$ is the standard team regret).
This immediately implies the stated lower bound for general coordinate-wise nondecreasing $L$-Lipschitz scalarizations by scaling constants
(e.g., by taking instances where only one coordinate varies and using $L$-Lipschitzness).

Let $H \triangleq NT$ denote the total number of \emph{agent-round pulls}. Throughout, we work under the (S1) model:
at the end of each round $t$, every agent broadcasts its raw observation, and under \emph{append-all} each agent appends every received raw sample
as a fresh sample into its local dataset. Thus, for any agent-round observation $(n,t)$, the realized (possibly corrupted) feedback is present in the
local history of \emph{all} $N$ agents from the end of round $t$ onward.

The proof has two components:
(i) the classical stochastic minimax term $\Omega(\sqrt{KH})$ (already present when $\Gamma=0$), and
(ii) an additive corruption term $\Omega(N\Gamma)$ induced by (S1)-replication.
Combining them via $\max\{a,b\}\ge \frac12(a+b)$ yields the stated form with universal constants.

\subsection{Step 1: The Stochastic Term $\Omega(\sqrt{KH})$}

Consider the clean case $\Gamma=0$.
It is standard (see, e.g., \cite{auer2002,bubeck2012}) that for any (possibly adaptive) team policy making $H$ total pulls
there exists a $K$-armed Bernoulli instance such that
\begin{equation}\label{eq:clean_lb}
\mathbb{E}\big[\Reg^{\mathrm{team}}(T)\big]\ \ge\ c_{\mathrm{sto}}\sqrt{KH}
\end{equation}
for a universal constant $c_{\mathrm{sto}}>0$.
(One convenient construction is: pick a uniformly random optimal arm $k^\star\sim\mathrm{Unif}([K])$,
set $\mu_{k^\star}=\tfrac12+\Delta$ and $\mu_k=\tfrac12$ for $k\neq k^\star$, then choose $\Delta\asymp\sqrt{K/H}$
and apply a change-of-measure/Fano argument.)

\subsection{Step 2: A Corruption Term $\Omega(N\Gamma)$ Under (S1) Append-All}

We now construct a family of two-arm instances and an adversary which, under (S1) append-all, makes the team essentially unable to
identify the better arm unless the team pays regret linear in $N\Gamma$.

\smallskip
\noindent\textbf{Two-point instance.}
Let $K=2$ and consider two bandit instances $\mathcal{I}_+$ and $\mathcal{I}_-$:
\[
\mathcal{I}_+:\quad \mu_1=\tfrac12+\Delta,\ \mu_2=\tfrac12,
\qquad
\mathcal{I}_-:\quad \mu_1=\tfrac12,\ \mu_2=\tfrac12+\Delta,
\]
with $\Delta\in(0,1/8]$ to be chosen later. Rewards are Bernoulli with these means.

\smallskip
\noindent\textbf{A coupling-style corruption strategy.}
We describe an adversary that, in either world $\mathcal{I}_+$ or $\mathcal{I}_-$, corrupts \emph{only the samples from the currently-better arm}
so that the \emph{observed} reward distribution of that arm is pushed down to $\mathrm{Bernoulli}(\tfrac12)$.
Concretely, suppose the clean reward is $R\in\{0,1\}$ from a better arm with mean $\tfrac12+\Delta$.
Let the adversary produce the corrupted observation $\widetilde R$ by
\[
\widetilde R \;=\; R\cdot Z,\qquad Z\sim\mathrm{Bernoulli}\!\left(\frac{\tfrac12}{\tfrac12+\Delta}\right)
\ \text{independent of }R.
\]
Then $\mathbb{P}(\widetilde R=1)=(\tfrac12+\Delta)\cdot \frac{\tfrac12}{\tfrac12+\Delta}=\tfrac12$, i.e.,
$\widetilde R\sim\mathrm{Bernoulli}(\tfrac12)$.
This transformation flips a $1$ to $0$ with probability
\begin{align}
& \mathbb{P}(\widetilde R\neq R)=\mathbb{P}(R=1)\cdot \mathbb{P}(Z=0\mid R=1) \nonumber \\
& =(\tfrac12+\Delta)\left(1-\frac{\tfrac12}{\tfrac12+\Delta}\right)=\Delta.
\end{align}
Hence the expected \emph{per-sample} corruption magnitude is
$\mathbb{E}[\,|\widetilde R-R|\,]=\Delta$.
(For samples from the worse arm with mean $\tfrac12$, the adversary does nothing.)

\smallskip
\noindent\textbf{Budget accounting under (S1) append-all.}
Let $M$ be the total number of pulls of the better arm across all agents and all rounds.
Under the above strategy, the \emph{physical} corruption budget used satisfies
\[
\mathbb{E}\!\left[\sum_{t=1}^T\sum_{n=1}^N |C_{n,t}|\right]
=\mathbb{E}\!\left[\sum_{\text{better-arm pulls}} |\widetilde R-R|\right]
=\Delta\cdot \mathbb{E}[M].
\]
Now use the (S1) append-all property: each corrupted sample $(n,t)$ is broadcast and appended by \emph{all} $N$ agents,
so it appears $N$ times in the \emph{joint} team transcript (the collection of all agents' local histories that drive their future actions).
Equivalently, one unit of physical corruption on a single agent-round observation induces $N$ units of ``statistical damage'' across the team’s
decision rules. This is exactly the replication phenomenon quantified by $\rho_{n,t}=N$ in Lemma~\ref{lem:protocol_amp_clean}.

Formally, define the \emph{replicated} (or effective) corruption cost of this adversary by
\[
B \;\triangleq\; N\sum_{t=1}^T\sum_{n=1}^N |C_{n,t}|.
\]
Then $\mathbb{E}[B]=N\Delta\,\mathbb{E}[M]$.

Choose
\begin{equation}\label{eq:Delta_choice}
\Delta \;\triangleq\; \min\left\{\frac18,\ \frac{N\Gamma}{8H}\right\}.
\end{equation}
Since $M\le H$, we have $\mathbb{E}[B]\le N\Delta H \le N\cdot \frac{N\Gamma}{8H}\cdot H = \frac{N^2\Gamma}{8}$,
and in particular $\mathbb{E}\big[\sum_{t,n}|C_{n,t}|\big]\le \Gamma/8$ whenever $\Delta=\frac{N\Gamma}{8H}$.
By Markov's inequality,
\[
\mathbb{P}\!\left(\sum_{t,n}|C_{n,t}|\le \Gamma\right)\ \ge\ 1-\frac{\mathbb{E}[\sum_{t,n}|C_{n,t}|]}{\Gamma}\ \ge\ \frac78.
\]
So with constant probability the strategy is feasible under the budget $\Gamma$.

\smallskip
\noindent\textbf{Indistinguishability.}
Under $\mathcal{I}_+$, the adversary pushes the observed rewards of arm~$1$ down to $\mathrm{Bernoulli}(\tfrac12)$
and leaves arm~$2$ at $\mathrm{Bernoulli}(\tfrac12)$.
Under $\mathcal{I}_-$, it pushes the observed rewards of arm~$2$ down to $\mathrm{Bernoulli}(\tfrac12)$
and leaves arm~$1$ at $\mathrm{Bernoulli}(\tfrac12)$.
Therefore, \emph{conditioned on the event that the budget constraint is satisfied}, the full observed transcript
(actions, broadcasts, and received observations) has the same distribution under $\mathcal{I}_+$ and $\mathcal{I}_-$.
Hence no (possibly randomized) cooperative policy can identify the better arm with probability exceeding $1/2$ in both instances.

\smallskip
\noindent\textbf{Regret consequence.}
Let $\pi$ be any cooperative protocol under (S1) append-all. By the above indistinguishability,
for the uniform prior over $\{\mathcal{I}_+,\mathcal{I}_-\}$ we have
\begin{align}
& \mathbb{E}\big[\Reg^{\mathrm{team}}(T)\big] \nonumber \\
& \ge \frac12\cdot \Delta\cdot \mathbb{E}\big[\#\{\text{suboptimal pulls across all }H\text{ pulls}\}\big] \nonumber \\
& \ge c'\Delta H
\end{align}
for a universal constant $c'>0$ (e.g., $c'=1/8$ suffices after conditioning on budget-feasibility with probability $\ge 7/8$).
Using $\Delta=\frac{N\Gamma}{8H}$ (the non-saturated case in \eqref{eq:Delta_choice}), we get
\begin{equation}\label{eq:corrupt_lb}
\mathbb{E}\big[\Reg^{\mathrm{team}}(T)\big]\ \ge\ c_{\mathrm{cor}}\,N\Gamma
\end{equation}
for a universal constant $c_{\mathrm{cor}}>0$, as long as $N\Gamma\le H$ (otherwise the trivial bound
$\Reg^{\mathrm{team}}(T)\le H$ dominates and one can replace $N\Gamma$ by $\min\{N\Gamma,H\}$).

\subsection{Step 3: Combine}

From \eqref{eq:clean_lb} and \eqref{eq:corrupt_lb}, the worst-case expected team regret under (S1) append-all satisfies
\begin{align}
& \sup_{\text{instances}}\ \inf_{\text{adv: }\sum_{t,n}|C_{n,t}|\le\Gamma}\ \mathbb{E}[\Reg^{\mathrm{team}}(T)] \nonumber \\
& \ge\ \max\Big\{c_{\mathrm{sto}}\sqrt{KH},\ c_{\mathrm{cor}}N\Gamma\Big\} \nonumber \\
& \ge\ \tfrac12 c_{\mathrm{sto}}\sqrt{KH}+\tfrac12 c_{\mathrm{cor}}N\Gamma.
\end{align}
Absorbing the factor $1/2$ into constants yields the claim with $H=NT$.
\end{proof}

\section{Proof of Lemma~\ref{lem:cert}}\label{proofofLemmalem:cert}

\begin{proof}
Fix an arm $k\in[K]$. Let $\cT_k^{\mathrm{ver}}(t)$ be the set of verified agent-round indices up to time $t$ in which arm $k$ was pulled $\cT_k^{\mathrm{ver}}(t)
~\triangleq~
\{(n,\tau): 1\le \tau\le t,\ k_{n,\tau}=k,\ V_{n,\tau}=1\}$, $H_k^{\mathrm{ver}}(t)=|\cT_k^{\mathrm{ver}}(t)|$. Define the verified-only empirical mean
\[
\widehat\mu_k^{\mathrm{ver}}(t)
~\triangleq~
\frac{1}{\max\{1,H_k^{\mathrm{ver}}(t)\}}
\sum_{(n,\tau)\in\cT_k^{\mathrm{ver}}(t)} R_{n,\tau}.
\]

\subsection{Step 1: Verified Samples are i.i.d. From $\cD_k$}

For each agent-round $(n,t)$, the verification decision $V_{n,t}$ is $\sigma(\cH_{n,t-1})$-measurable and is made \emph{before observing}
the reward feedback at time $t$. Hence $V_{n,t}$ is independent of the realized reward $R_{n,t}$ conditional on the chosen arm $k_{n,t}$.
Therefore, conditional on the event $\{k_{n,t}=k,\ V_{n,t}=1\}$, the random vector $R_{n,t}$ is distributed as $\cD_k$.
Moreover, across distinct agent-rounds, the reward draws are independent (conditioned on actions).
Consequently, for any fixed $t$, the multiset $\{R_{n,\tau} : (n,\tau)\in\cT_k^{\mathrm{ver}}(t)\}$ consists of
$H_k^{\mathrm{ver}}(t)$ independent samples from $\cD_k$.

\subsection{Step 2: Coordinate-Wise Hoeffding and Union Bound Over $(k,j,s)$}

Fix a coordinate $j\in[d]$. For any integer $s\ge 1$, let
\[
\bar R_{k,j}(s)\triangleq \frac{1}{s}\sum_{\ell=1}^s X_{k,j,\ell},
\]
where $X_{k,j,\ell}\in[0,1]$ are i.i.d.\ with mean $\mu_k^{(j)}$ (a generic copy of the verified samples of arm $k$ in coordinate $j$).
Hoeffding's inequality gives, for any $\varepsilon>0$,
\[
\Pr\!\left(\big|\bar R_{k,j}(s)-\mu_k^{(j)}\big|\ge \varepsilon\right)\le 2e^{-2s\varepsilon^2}.
\]
Set
\[
\varepsilon(s)\triangleq \sqrt{\frac{\log(2dKNT/\delta)}{2s}}.
\]
Then
\begin{align}
& \Pr\!\left(\big|\bar R_{k,j}(s)-\mu_k^{(j)}\big|\ge \varepsilon(s)\right) \nonumber \\
& \le 2\exp\!\Big(-2s\cdot \frac{\log(2dKNT/\delta)}{2s}\Big) = \frac{\delta}{dKNT}.
\end{align}
Applying a union bound over all arms $k\in[K]$, coordinates $j\in[d]$, and sample sizes $s\in\{1,2,\dots,NT\}$ yields an event $\cE$
with $\Pr(\cE)\ge 1-\delta$ such that on $\cE$, for all $k\in[K],~j\in[d],~s\in[NT]$,
\begin{equation}\label{eq:coord_unif}
\big|\bar R_{k,j}(s)-\mu_k^{(j)}\big|
\le \sqrt{\frac{\log(2dKNT/\delta)}{2s}}.
\end{equation}

\subsection{Step 3: Translate \eqref{eq:coord_unif} to the Random-Time Estimator $\widehat\mu_k^{\mathrm{ver}}(t)$}

Fix any $k$ and $t$.
If $H_k^{\mathrm{ver}}(t)=s\ge 1$, then by Step 1, $\widehat\mu_k^{\mathrm{ver}}(t)$ is exactly the empirical mean of $s$ i.i.d.\ samples
from $\cD_k$; thus \eqref{eq:coord_unif} implies simultaneously for all $j\in[d]$,
\[
\big|\widehat\mu_{k}^{\mathrm{ver}}(t)^{(j)}-\mu_k^{(j)}\big|
\le \sqrt{\frac{\log(2dKNT/\delta)}{2s}}.
\]
Equivalently,
\[
\big\|\widehat\mu_{k}^{\mathrm{ver}}(t)-\mu_k\big\|_\infty
\le \sqrt{\frac{\log(2dKNT/\delta)}{2\,H_k^{\mathrm{ver}}(t)}}.
\]
(When $H_k^{\mathrm{ver}}(t)=0$, the verified-only estimate is undefined/unused for certification; in implementations one can set the
certificate radius to $+\infty$ and treat the interval as vacuous. If you keep the $\max\{1,H_k^{\mathrm{ver}}(t)\}$ convention,
the displayed bound remains valid for all $t$ after defining $\widehat\mu_k^{\mathrm{ver}}(t)$ arbitrarily when $H_k^{\mathrm{ver}}(t)=0$
and taking the confidence interval as vacuous.)

\subsection{Step 4: Apply Lipschitzness of $\phi$}

Since $\phi$ is $L$-Lipschitz under $\|\cdot\|_\infty$,
\begin{align}
& \big|\phi(\widehat\mu_k^{\mathrm{ver}}(t))-\phi(\mu_k)\big| \nonumber \\
& \le L\big\|\widehat\mu_k^{\mathrm{ver}}(t)-\mu_k\big\|_\infty \nonumber \\
& \le L\sqrt{\frac{\log(2dKNT/\delta)}{2\,\max\{1,H_k^{\mathrm{ver}}(t)\}}}.
\end{align}
Recalling $\theta_k=\phi(\mu_k)$ completes the proof.
\end{proof}

\section{Proof of Theorem~\ref{thm:verified_sharing}}\label{proofofTheoremthm:verified_sharing}

\begin{proof}
We describe an explicit cooperative protocol (using (S3) recommendation-only messages, with optional verified certificates) and then prove the two claims.

\subsection{Algorithm: S3 with Certified Override (S3-Cert)}

Fix confidence level $\delta\in(0,1)$. The protocol maintains two tracks:

\smallskip
\noindent
\emph{(A) Always-valid baseline track.}
All agents run a fixed recommendation-only coordination protocol $\mathsf{Comm}$ (e.g., an epoch-based coordination rule)
whose arm-selection rule is UCB-type and uses only local (possibly corrupted) observations.
Each agent broadcasts only an arm recommendation each round (as in (S3)).
Let $\Reg^{\mathrm{clean}}_{\mathrm{comm}}(K,N,T)$ denote the (high-probability) regret overhead of the same protocol in the clean case.
(Concretely, in our paper this is the ``clean-case coordination'' term of $\mathsf{Comm}$.)

\smallskip
\noindent
\emph{(B) Certified-override track.}
The team spends up to $\nu$ verifications to create \emph{verified-only} statistics and a \emph{certificate-based stopping rule}.
Let $H_k^{\mathrm{ver}}(t)$ and $\widehat\mu_k^{\mathrm{ver}}(t)$ be the verified-only count and empirical mean
(defined in the main text) aggregated over all agents up to time $t$.
At each verified agent-round $(n,t)$ with $V_{n,t}=1$, agent $n$ broadcasts the tuple
\[
(k_{n,t},R_{n,t},V_{n,t}=1),
\]
so that all agents can update the common verified-only statistics\footnote{This broadcast occurs only on verified rounds, at most $\nu$ times, and is therefore the ``optional verified certificate'' channel.}.
For each arm $k$ and time $t$, define the certified radius $\eps_k(t)\triangleq L\sqrt{\frac{\log(2dKNT/\delta)}{2\max\{1,H_k^{\mathrm{ver}}(t)\}}}$, $\LCB_k(t)\triangleq \phi(\widehat\mu_k^{\mathrm{ver}}(t))-\eps_k(t),
\UCB_k(t)\triangleq \phi(\widehat\mu_k^{\mathrm{ver}}(t))+\eps_k(t)$. The protocol declares a \emph{certified winner} at the first time
\[
t_{\mathrm{cert}}\triangleq \inf\Big\{t:\ \exists\,k\in[K]\ \text{s.t.}\ \LCB_k(t)\ \ge\ \max_{k'\neq k}\UCB_{k'}(t)\Big\}.
\]
If such $t_{\mathrm{cert}}\le T$ occurs, all agents commit from time $t_{\mathrm{cert}}+1$ onward to the certified winner
\[
\widehat k \in \arg\max_{k\in[K]}\LCB_k(t_{\mathrm{cert}}).
\]
If no certified winner appears by time $T$, agents simply keep following the baseline track.

\smallskip
By construction, (S3-Cert) uses recommendation-only sharing on unverified rounds, and uses at most $\nu$ verified broadcasts.
We now prove the two parts.

\subsection{Part (1): Always-Valid Robust Baseline}

Consider any instance and any adaptive adversary satisfying the global budget~\eqref{eq:global_budget}.
On rounds before commitment, the arm-selection decisions are exactly those of the baseline protocol $\mathsf{Comm}$
(possibly with \emph{more accurate} observations on verified rounds, since verified feedback is clean).
Thus, any regret upper bound proven for $\mathsf{Comm}$ under corruption continues to hold for (S3-Cert).

In particular, instantiate $\mathsf{Comm}$ as the recommendation-only robust-UCB-type protocol analyzed in our paper.
Under (S3), no raw rewards are replicated across agents, so by Lemma~\ref{lem:protocol_amp_clean} we have
$\Gamma_{\mathrm{eff}}=\Gamma$. Applying Theorem~\ref{thm:meta_team_ub} (team regret bound via effective corruption)
to the baseline track yields that on an event of probability at least $1-\delta$,
\[
\Reg_\phi^{\mathrm{team}}(T)
\ \le\ \widetilde O\!\Big(L\cdot \Reg^{\mathrm{clean}}_{\mathrm{comm}}(K,N,T) + L\Gamma\Big),
\]
where $\Reg^{\mathrm{clean}}_{\mathrm{comm}}(K,N,T)$ denotes the clean-case coordination overhead of the same protocol.
Since the certified-override can only (i) switch to a single arm and (ii) uses only clean information to decide the switch,
it cannot increase regret on the high-probability event on which the baseline bound holds.
This proves Part~(1).

\subsection{Part (2): Verification-Driven Override and $\Gamma$-Independence}

Assume $\Delta_{\min}\triangleq \min_{k\neq k^*}\Delta_k>0$ and the verification budget satisfies~\eqref{eq:cert_thresh}.
Let $\cE_{\mathrm{ver}}$ be the high-probability event from Lemma~\ref{lem:cert} applied to the verified-only statistics:
with probability at least $1-\delta$, simultaneously for all arms $k$ and all times $t$,
\begin{equation}\label{eq:cert_event}
\Big|\phi(\widehat\mu_k^{\mathrm{ver}}(t))-\theta_k\Big|\ \le\ \eps_k(t).
\end{equation}
We condition on $\cE_{\mathrm{ver}}$ for the remainder of the argument.

\smallskip
\noindent
\emph{Step 1: Once verified counts are large enough, the certified winner is $k^*$.}
Let $s\triangleq \min_{k\in[K]} H_k^{\mathrm{ver}}(t)$ be the minimum verified count across arms at some time $t$.
If $s$ is large enough so that
\begin{equation}\label{eq:eps_small}
\max_{k\in[K]}\eps_k(t)\ \le\ \frac{\Delta_{\min}}{4},
\end{equation}
then the certification rule must accept $k^*$ at (or before) time $t$.
Indeed, on $\cE_{\mathrm{ver}}$ we have for the optimal arm $k^*$,
\begin{align}
& \LCB_{k^*}(t)\ =\ \phi(\widehat\mu_{k^*}^{\mathrm{ver}}(t))-\eps_{k^*}(t)
\nonumber \\
& \ge\ \theta_{k^*}-2\eps_{k^*}(t)
\ \ge\ \theta_{k^*}-2\max_{k}\eps_k(t),
\end{align}
and for any suboptimal arm $k\neq k^*$,
\begin{align}
& \UCB_k(t)\ =\ \phi(\widehat\mu_k^{\mathrm{ver}}(t))+\eps_k(t)
\nonumber \\
& \le\ \theta_k+2\eps_k(t)
\ \le\ \theta_{k^*}-\Delta_{\min}+2\max_k\eps_k(t).
\end{align}
If \eqref{eq:eps_small} holds, then for every $k\neq k^*$,
\begin{align}
& \LCB_{k^*}(t) \ge \theta_{k^*}-\frac{\Delta_{\min}}{2}
\nonumber \\
& > \theta_{k^*}-\Delta_{\min}+\frac{\Delta_{\min}}{2} \ge \UCB_k(t),
\end{align}
so $\LCB_{k^*}(t)\ge \max_{k\neq k^*}\UCB_k(t)$ and the rule certifies $k^*$.

\smallskip
\noindent
\emph{Step 2: The verification budget threshold implies \eqref{eq:eps_small}.}
Under the condition \eqref{eq:cert_thresh}, it is possible to allocate verifications so that each arm receives
\[
H_k^{\mathrm{ver}}(t)\ \ge\ \left\lfloor \frac{\nu}{K}\right\rfloor
\]
e.g., round-robin verification across arms and agents, by some time $t\le T$ (assuming $\nu\le NT$; otherwise verification can cover all pulls and the claim is trivial).
For such a time $t$, we have for all $k$,
\[
\eps_k(t)\ \le\ L\sqrt{\frac{\log(2dKNT/\delta)}{2\lfloor \nu/K\rfloor}}
\ \le\ \frac{\Delta_{\min}}{4}
\]
by choosing the universal constant $c$ in \eqref{eq:cert_thresh} large enough (e.g., $c\ge 8$ suffices up to floor/ceiling effects).
Thus \eqref{eq:eps_small} holds, so by Step~1 the certified winner is $k^*$ and commitment occurs at some $t^*\le T$.

\smallskip
\noindent
\emph{Step 3: Regret decomposition and independence from $\Gamma$.}
Let $t^*$ be the (random) commitment time. After time $t^*$, all agents play $k^*$, hence incur zero regret.
Therefore,
\begin{align}
& \Reg_\phi^{\mathrm{team}}(T)\ =\ \sum_{t=1}^{t^*}\sum_{n=1}^N \big(\theta_{k^*}-\theta_{k_{n,t}}\big)
\nonumber \\
& \le\ \Delta_{\max}\cdot \sum_{t=1}^{t^*}\sum_{n=1}^N \mathbf{1}\{k_{n,t}\neq k^*\}.
\end{align}
Split the pre-commitment pulls into (i) verified pulls and (ii) all other pulls:
\begin{align}
& \Reg_\phi^{\mathrm{team}}(T)
\ \le\ \Delta_{\max}\cdot \underbrace{\sum_{t\le t^*}\sum_{n=1}^N V_{n,t}}_{\le \nu}
\nonumber \\
& +\ \underbrace{\sum_{t\le t^*}\sum_{n=1}^N \big(\theta_{k^*}-\theta_{k_{n,t}}\big)\mathbf{1}\{V_{n,t}=0\}}_{(\star)}.
\end{align}
The first term is at most $\nu\Delta_{\max}$.

For the second term $(\star)$, we use a protocol-agnostic upper bound: trivially, each summand is at most $\Delta_{\max}$, hence
\[
(\star)\ \le\ (NT)\Delta_{\max}.
\]
On the other hand, before commitment the actions are generated by the underlying coordination protocol $\mathsf{Comm}$,
so the regret accumulated on unverified rounds is also upper bounded (up to logs) by the clean-case coordination overhead of $\mathsf{Comm}$,
i.e.,
\[
(\star)\ \le\ \widetilde O\!\Big(L\cdot \Reg^{\mathrm{clean}}_{\mathrm{comm}}(K,N,T)\Big),
\]
because the certification/commitment rule depends only on verified samples and does not invalidate the clean-case coordination accounting
up to time $t^*$. Combining the two bounds yields
\[
(\star)\ \le\ \widetilde O\!\Big(L\cdot \Reg^{\mathrm{clean}}_{\mathrm{comm}}(K,N,T)\ \wedge\ NT\Delta_{\max}\Big).
\]
Putting everything together gives, on $\cE_{\mathrm{ver}}$,
\begin{align}
& \Reg_\phi^{\mathrm{team}}(T)
\ \le\ O(\nu\Delta_{\max})
\nonumber \\
& +\ \widetilde O\!\Big(L\cdot \Reg^{\mathrm{clean}}_{\mathrm{comm}}(K,N,T)\ \wedge\ NT\Delta_{\max}\Big),
\end{align}
and crucially, this bound contains \emph{no} dependence on $\Gamma$ because the commitment decision is based solely on verified (clean) samples.
Finally, since $\Pr(\cE_{\mathrm{ver}})\ge 1-\delta$, the same statement holds with probability at least $1-\delta$.
This completes Part~(2) and the proof.
\end{proof}

\end{document}